\newcommand{\obsnoise}{\kernelstep}
\newcommand{\weight}{\gamma}
\newcommand{\step}{\varepsilon}
\newcommand{\e}{\step}
\newcommand{\stepa}{\mu}
\newcommand{\kernelstep}{\mu}
\newcommand{\kernel}{K}
\newcommand{\kerneln}{\frac{1}{\kernelstep^\thdim}\,\kernel}
\renewcommand{\th}{\theta}
\renewcommand{\eth}{\alpha}
\newcommand{\Cost}{C}
\newcommand{\cost}{c}
\newcommand{\obs}{y}
\newcommand{\dtime}{k}
\newcommand{\thdim}{N}
\newcommand{\reals}{{\mathbb R}}
\newcommand{\beq}{\begin{equation}}
\newcommand{\eeq}{\end{equation}}
\newcommand{\pdf}{p}
\newcommand{\E}                 {\Bbb{E}}
\newcommand{\horizon}{T}
\newcommand{\belief}{\pi}
\newcommand{\argmin}{\operatornamewithlimits{arg\,min}}
\newcommand{\normald}{\mathbf{N}}
\newcommand{\normal}{\normald}
\newcommand{\p}{\prime}
\newcommand{\thtrue}{{\th^o}}
\newcommand{\snoise}{w}
\newcommand{\al}{\eth}
\newcommand{\wdt}{\widetilde}
\newcommand{\wdh}{\widehat}
\newcommand{\beql}[1]{\begin{equation} \label{#1}}
  \newcommand{\bed}{\begin{displaymath}}
\newcommand{\eed}{\end{displaymath}}
  \newcommand{\bea}{\bed\begin{array}{rl}}
\newcommand{\eea}{\end{array}\eed}
\newcommand{\ad}{&\!\!\!\disp}
\newcommand{\aad}{&\disp}
\def\disp{\displaystyle}
\newcommand{\barray}{\begin{array}{ll}}
                       \newcommand{\earray}{\end{array}}
                     \def\({\Big(}
                     \def\){\Big)}
\newcommand{\cd}{(\cdot)}
\def\op{{\cal L}}
\newcommand{\george}{\footnote}
\def\george#1 {\fbox {\footnote {\ }}\ \footnotetext { From George: #1}}
\def\vikram#1 {\fbox {\footnote {\ }}\ \footnotetext { From Vikram: #1}}
\def \defn {\stackrel{\triangle}{=}}
\newcommand{\numparticles}{L}
\definecolor{mygreen}{RGB}{28,172,0} 
\definecolor{mylilas}{RGB}{170,55,241}
\newcommand{\ones}{\mathbf{1}}
\newtheorem{theorem}            {Theorem}
\newtheorem{corollary}          [theorem]{Corollary}
\newtheorem{proposition}        [theorem]{Proposition}
\newtheorem{lemma}              [theorem]{Lemma}
\newcommand{\cov}{\operatorname{Cov}}
\newenvironment{proof}{\noindent {\bf Proof:}}{\hfill$\square$}
\newcommand{\tr}{{\rm tr}}
\def\op{{\cal L}}
\def\al{\alpha}
\def\th{\theta}
\def\({\Big(}
\def\){\Big)}
\def\disp{\displaystyle}
\def\sumii{\sum_{k\in I_\e}}
\def\para#1{\vskip .25\baselineskip\noindent{\sf #1}}
\def\qed{\hfill{$\Box$}}
\newcommand{\D}{\wdh \nabla}
\newcounter{assum_index}
\newcommand{\mean}{m}
\newcommand{\asympCov}{P}
\newcommand{\clhs}{a}
\newcommand{\crhs} {f}
\newcommand{\lagrange}{\lambda}
\newcommand{\scal}{s}
\newcommand{\sigmagauss}{\sigma}
\newcommand{\Noise}{W}
\renewcommand{\L}{\mathcal{L}}
\begin{document}

\title{Multi-kernel Passive  Stochastic Gradient Algorithms and Transfer Learning}

\author{Vikram Krishnamurthy, {\em Fellow IEEE} and    George Yin, {\em Fellow IEEE}
  \thanks{Vikram Krishnamurthy is with the School of
 Electrical \& Computer
Engineering,  Cornell University, NY 14853, USA.
(vikramk@cornell.edu).  G. Yin is with Department of Mathematics, University of Connecticut, Storrs, CT 06269-1009, USA. (gyin@uconn.edu).
 This research was supported by U.S.\ Army Research Office under grant  W911NF-19-1-0365, National Science Foundation under grant 1714180, and Air Force Office of Scientific Research under grant FA9550-18-1-0268.
}}


\maketitle

\begin{abstract}
This paper develops a novel passive stochastic gradient algorithm.
  In  passive stochastic approximation, the stochastic gradient algorithm does not have control over the location where noisy gradients of the  cost function are evaluated. Classical  passive  stochastic gradient algorithms use a kernel that approximates a Dirac delta to weigh the gradients based on how far they are evaluated from the desired point. In this paper we construct a multi-kernel passive stochastic gradient algorithm. The algorithm performs substantially  better in high dimensional problems and incorporates variance reduction. We analyze the weak convergence of the multi-kernel algorithm and its rate of convergence. In numerical examples, we study the multi-kernel version of the passive least mean squares (LMS) algorithm for transfer learning
  to compare the performance
  with the classical passive version.
\end{abstract}

{\em Keywords}.
stochastic gradient algorithm, weak convergence,   stochastic sampling,  variance reduction, passive LMS, transfer learning,  Bernstein von-Mises theorem

\allowdisplaybreaks

\section{Introduction} \label{sec:intro}

\IEEEPARstart{S}{\lowercase{uppose}} an  agent evaluates noisy gradients of a cost function $\Cost(\cdot)$. At each time $k$,  the agent samples a random point $\th_k  \in \reals^\thdim$  from the probability density   $\belief(\cdot)$   and then
evaluates the noisy gradient $\D_\th \cost_k(\th_{k})$ of the true gradient
$\nabla \Cost(\th_k)$.
  By intercepting the dataset $\{\th_k,\D_\th \cost_k(\th_{k}),k=1,2,\ldots\}$ from the agent,  how can we estimate a local stationary point of  the cost $\Cost(\cdot)$?


 It is well known  \cite{Rev77,HN87,NPT89,YY96} that
given the dataset $\{\th_k, \D_\th \cost_k(\th_k),k=1,2,\ldots\}$, we can estimate a local stationary point of $\Cost(\cdot)$ using the following classical {\em passive} stochastic gradient algorithm:
\beq
\eth_{k+1} = \eth_k - \step   \kerneln(\frac{\th_{k} - \eth_k}{\kernelstep}) \,\D_\th \cost_k(\th_{k}) ,
\quad \th_k \sim \belief
\label{eq:passive}  \eeq
where step size  $\step$ is a small positive constant.
Note that  (\ref{eq:passive}) is a  {\em passive} stochastic gradient   algorithm since the gradient is not evaluated at $\eth_\dtime$ by the algorithm; instead the noisy
gradient $ \D_\th \cost_k(\th_{k})$ is evaluated at a  random point  $\th_{\dtime}$ chosen by the agent from probability density  $\belief$.

The key construct in the passive gradient algorithm (\ref{eq:passive})  is the kernel function
   $\kernel(\cdot)$. This kernel function $\kernel(\cdot)$ is chosen  such that it decreases monotonically to zero as any component of the argument increases to infinity, and
   \beq \label{eq:kernel_properties} \kernel(\th) \geq 0, \quad \kernel(\th) = \kernel(-\th), \quad  \int_{\reals^\thdim} \kernel(\th) d\th = 1.
   \eeq
 The parameter  $\kernelstep$ that appears in the kernel in  (\ref{eq:passive}) is a small positive constant.
Examples of the kernel $\kernel(\cdot)$ include  the  multivariate normal  $\normald(0,\sigma^2I_{\thdim})$ density\footnote{With suitable abuse of notation, we use $\normal$ for both normal density and distribution; the distinction is clear from  the context.}  with $\sigma = \kernelstep $, i.e.,
$$ \kerneln\bigl(\frac{\th}{\kernelstep}\bigr) = (2 \pi)^{-\thdim/2}  \kernelstep^{-\thdim} \exp \bigl (- \frac{\|\th\|^2}{2 \kernelstep^2}\bigr),
$$
which is essentially  like
a Dirac delta centered at 0 as $\kernelstep \rightarrow 0$.

The kernel $\kernel(\cdot)$ in (\ref{eq:passive})  weights the usefulness of the gradient $\D_\th\cost_\dtime(\th_\dtime)$ compared to the required gradient estimate
$\D_\eth
\cost_\dtime(\eth_\dtime)$.
If $\th_\dtime$ and $\eth_\dtime$ are far apart, 
kernel $\kernel((\th_\dtime-\eth_\dtime)/\kernelstep)$  will be  small. Then only a small proportion of the gradient estimate $\D_\th \cost_\dtime(\th_\dtime)$ is added to the passive algorithm. On the other hand, if $
\eth_\dtime = \th_\dtime$, then $\kerneln(\cdot) = 1$ and  (\ref{eq:passive}) becomes a standard stochastic gradient algorithm.

\subsection*{Main Idea: Multi-kernel Passive Algorithm}
    For high dimensional problems (large  $\thdim$), the passive algorithm (\ref{eq:passive}) can take a large number of iterations to converge.
    This is because with high probability, the kernel $\kernel(\th_k,\eth_k)$ will be close to zero and so updates of $\eth_k$  will occur very rarely.
Further, from an implementation point
of view, for small $\kernelstep$, the scale factor $\kernelstep^{-\thdim}$ in (\ref{eq:passive}) blows up  for moderate to large $\thdim$; to compensate, a very small step size $\step$ needs to be used.
    Also algorithm (\ref{eq:passive}) is sensitive to the choice of
    the probability density $\belief(\cdot)$  from which the $\th_k$ are sampled to generate $\D_\th \cost_\dtime(\th_\dtime)$.
Moreover,
there is strong motivation to introduce variance reduction in the algorithm.

{\em Our main idea is to propose and analyze a
two time step, multi-kernel, variance reduction algorithm motivated by importance sampling.}
Apart from the ability to deal with high dimensional problems, the algorithm achieves variance reduction in the samples.

Assume that  at each time $k$ we are given a sequence of noisy gradients
$\{\D_\th \cost_{k,l}(\th_{k,l}), l = 1,\ldots, \numparticles\}$ which are  unbiased estimates\footnote{In Sec.\ref{sec:weak}, we  make the dependence of
$\D_\th
\cost_{k,l}(\th)$
on $k$ and $l$ more general in terms of additive measurement noise that is i.i.d. in $l$ and mixing in $k$.}
of $\nabla \Cost(\th_{k,l})$
Here the points $\th_{k,l}$ are sampled i.i.d. from density  $\belief(\cdot)$.
Given $\{\th_{k,l}, \D_\th \cost_{k,l}(\th_{k,l}), l = 1,\ldots, \numparticles\}$ at each time $k$, we propose the following multi-kernel passive algorithm with
step size~$\step$:
\beq
\boxed{\begin{split}
  \eth_{\dtime+1} &= \eth_\dtime - \step \, \frac{\sum_{i=1}^\numparticles   \pdf(\eth_\dtime|\th_{\dtime,i}) \D_\th \cost_{\dtime,i}(\th_{\dtime,i}) }{\sum_{l=1}^\numparticles \pdf(\eth_\dtime|\th_{\dtime,l})}, \quad \th_{k,i} \sim \belief
       \end{split}} \label{eq:mcmcirl}
     \eeq
     In (\ref{eq:mcmcirl}),  we choose the conditional probability density function
     \beql{eq:cond-al}\pdf(\eth|\th) = \pdf_\obsnoise(\th - \eth)
     \eeq
      where $ \pdf_\obsnoise(\cdot)$  is a symmetric density about 0 with variance $O(\kernelstep^2)$.
      For example, we can choose
      $ \pdf_\obsnoise(\cdot)$ to be the density of normal distribution
       $\normald(0,\kernelstep^2I_\thdim)$
      or
      an~$\thdim$-variate Laplace density with scale parameter $\kernelstep$:
      \beq \pdf_\obsnoise(\th-\eth) =
      \frac{1}{(2  \kernelstep)^{\thdim}} \exp \bigl (- \frac{\|\th-\eth\|_1}{\kernelstep}\bigr).
      \label{eq:laplace}
\eeq

    For notational convenience, for each $\eth_\dtime$, denote the normalized weights in (\ref{eq:mcmcirl})  at time $k$ as
\beq  \weight_{\dtime,i}(\eth_\dtime)  =   \frac{\pdf(\eth_\dtime|\th_{\dtime,i}) }{\sum_{l=1}^\numparticles \pdf(\eth_\dtime|\th_{\dtime,l})}  , \quad i = 1,\ldots, \numparticles   \label{eq:weights} \eeq
     Then these
     $\numparticles $ normalized weights  qualify  as
   symmetric kernels in the sense of (\ref{eq:kernel_properties}). Thus  algorithm (\ref{eq:mcmcirl}) can be viewed as  a multi-kernel  passive stochastic approximation
   algorithm.

\subsection*{Discussion}
(i) The key idea behind the multi-kernel algorithm \eqref{eq:mcmcirl} is as follows: using importance sampling arguments and averaging theory (Theorem \ref{thm:conv1} below),  as $\numparticles \rightarrow \infty$, the RHS of \eqref{eq:mcmcirl} yields
\begin{equation}\barray \disp
 \sum_{i=1}^\numparticles \weight_{\dtime,i}(\al) \,
\D_\th
\cost_{\dtime,i}(\th_{\dtime,i}) \ad\xrightarrow{\text{w.p.1}} \int_{\reals^\thdim} \nabla \Cost(\th) \, \pdf_\stepa(\th|\al) \, d\th
 \\
\ad = \E\{\D_\th \cost_{\dtime,l}
(\th) | \eth_\dtime = \eth\}
\label{eq:sir}\earray
\end{equation}
where $\pdf_\stepa(\th|\eth_k) \propto \belief(\th) \,\pdf(\eth_k|\th)$ denotes the posterior  density of $\th$ given $\eth_k$ where likelihood  $\pdf(\eth|\th)$ is evaluated in \eqref{eq:cond-al}. Thus the RHS
of \eqref{eq:mcmcirl} mimics a simulation based Bayesian update.
It is this posterior $\pdf_\stepa(\th|\eth_k)$ that gives the gradient algorithm \eqref{eq:mcmcirl} improved performance compared to the classical
passive algorithm \eqref{eq:passive}.
Note that the  conditional expectation $ \E\{\D_\th \cost_{\dtime,l}(\th) | \eth_\dtime \}$  always has smaller variance than
$ \D_\th \cost_{\dtime,l}(\th)$; therefore variance reduction is achieved in  the multi-kernel algorithm
(\ref{eq:mcmcirl}). \\
(ii)  Unlike the classical passive algorithm (\ref{eq:passive}), the multi-kernel algorithm (\ref{eq:mcmcirl}) does not have
the problematic term $O(\kernelstep^{-\thdim})$. Indeed, we can choose $\pdf(\eth|\th)
\propto  \pdf_\obsnoise(\th - \eth)$ in \eqref{eq:cond-al} since the scale factors cancel out. So from a practical point of view,  the multi-kernel algorithm has better numerical properties and does not need fine tuning
 the step size.
\\
(iii)
Throughout this paper we consider constant step size
algorithms, i.e., $\step$ is a fixed constant (instead of a decreasing step size).  This facilitates  estimating (tracking) parameters  that evolve over time. Due to the constant step size, the appropriate notion of convergence is weak convergence \cite{KY03,EK86,Bil99}.\\
(iv) Sec.\ref{sec:weak} and \ref{sec:rate}  analyze weak convergence  and asymptotic
convergence rate of the multi-kernel algorithm. We show that the multi-kernel algorithm has the same asymptotic convergence rate as a classical stochastic approximation algorithm.
In comparison, the classic passive stochastic gradient algorithm
 needs to ``balance'' the stepsize $\e$ with kernel step size $\mu$; indeed \cite{YY96}
shows that the convergence rate of classical passive stochastic gradient algorithm is always slower than that of the classical stochastic gradient algorithm.
  Thus, the
 multi-kernel algorithm always has faster rate of convergence than the passive algorithm  \eqref{eq:passive}.

\subsection*{Examples}

We refer to \cite{Rev77,HN87,NPT89,YY96} for the analysis and applications of  passive stochastic  gradient algorithms.  \cite{HN87} illustrates the classical passive gradient algorithm  on a real data set in forensic medicine for  estimating the mean age from weight of unknown corpses. \cite{YY96} presents  a detailed application in parameter estimation
    of chemical processing plants.
Recently, we have developed inverse  reinforcement learning  \cite{KY20} using simulated annealing versions of passive stochastic gradient algorithms.
In addition to these examples,
from an application point of view, the above setup can be viewed in a passive (or adversarial) framework.  We  passively intercept (view) the dataset $\{\th_{k,l}, \D_\th \cost_{k,l}(\th_{k,l}), l=1\ldots,\numparticles\}$ generated by $\numparticles$ independent agents. By intercepting the dataset, how can we estimate a stationary point of the cost $\Cost(\cdot)$?   Note that we have no control over where the agent evaluates the noisy gradients.

Another  application is discussed  in Sec.\ref{sec:mis}  where at each time $k$ we request the evaluation of the gradient at point $\eth_k$. However, the agent evaluates the gradient at a mis-specified point $\th_k$. Unlike classical stochastic gradient algorithms where only the gradient evaluated at $\eth_k$ is corrupted by noise, here both the evaluation point $\eth_k$ (noisy value of $\th_k)$ and the gradient value
$\D_\th \cost_k(\th_k)$  are corrupted by noise.

Finally, Sec.\ref{sec:numerical} discusses an application of passive stochastic approximation involving transfer learning and the passive least mean squares algorithm.
Transfer learning refers to using knowledge gained in one domain to learn in another domain. For our purposes,
we show how to estimate  the solution of a stochastic optimization problem by observing the training data of another stochastic optimization problem. In effect the knowledge gained by solving one stochastic optimization problem is transferred to solving another problem.

\subsection*{Organization}
Sec.\ref{sec:informal} discusses the main intuition behind the passive algorithm using the ordinary differential equations obtained via stochastic averaging.  Sec.\ref{sec:weak} gives a formal weak convergence proof of the multi-kernel passive algorithm (\ref{eq:mcmcirl}).
Sec.\ref{sec:rate} characterizes  the rate of convergence of the multi-kernel algorithm.
Sec.\ref{sec:mis} discusses a   mis-specified algorithm where the gradient is evaluated at a point $\th_\dtime$ that is a corrupted value of $\eth_\dtime$. Finally, Sec.\ref{sec:numerical} considers passive least mean squares (LMS) algorithms for transfer learning;   we  compare  in numerical examples the convergence of the classical passive LMS algorithm versus the multi-kernel passive LMS algorithm.

\section{Informal Convergence Analysis of Passive Algorithms}
\label{sec:informal}

The main intuition behind the passive algorithms is straightforwardly  captured by averaging theory. We discuss this below.
As is well known
 \cite{KY03}, a classical fixed step size  stochastic gradient  algorithm
converges weakly  to a \textit{deterministic}  ordinary differential equation (ODE) limit; this is the basis of the so-called ODE approach  for analyzing stochastic gradient algorithms.
Weak convergence  is a function space generalization of convergence in distribution.
As is typically done in weak convergence analysis,
we first represent the sequence of estimates $\{\eth_k\}$  generated by the passive algorithm as a continuous-time random process. This is done by constructing the continuous-time trajectory  via piecewise
constant interpolation as follows:
For
 $t   \in [0,\horizon]$,
define the continuous-time piecewise constant interpolated process parametrized by the step size
$\step$ as  
\beq  
\eth^\step(t) = \eth_\dtime , \; \text{ for } \ t\in [\step \dtime, \step \dtime+ \step). \label{eq:interpolatedp} \eeq

\subsection{Ordinary Differential Equation Limits of (\ref{eq:passive}) and (\ref{eq:mcmcirl})}   \label{sec:ODEinformal}

In this section we present  an informal averaging analysis which yields useful intuition  regarding the classical passive algorithm~(\ref{eq:passive}) and multi-kernel algorithm~(\ref{eq:mcmcirl}). Formal assumptions, theorem statements and  proofs are in Sec.\ref{sec:weak}.

\subsubsection{Classical Passive Gradient Algorithm} First consider the classical passive  gradient algorithm~(\ref{eq:passive}).
Suppose $\th_{\dtime}$ is  sampled i.i.d. from $\thdim$-variate
density $\belief(\cdot)$ and the noisy gradient $\D_\th \cost_{\dtime}(\th_{\dtime}) $ is available at each time $k$.
Assume that the noisy gradient $\D_\th \cost_k(\th_k)$ comprises additive noise:
$$\D_\th \cost_k(\th_k) = \nabla \Cost(\th_k) + \xi_k$$
where $\xi_k$ is a zero mean i.i.d. noise process
We  first {\it fix} the kernel step size  $\kernelstep$ and apply stochastic averaging theory arguments.
It indicates
that at the slow time scale, we can replace the fast variables (namely, $\xi_k$ and $\th_k$) by their expected value.
Then
the interpolated sequence $\th^\step(\cdot) $ converges weakly to the
ODE
\beq
\frac{d \eth}{dt}  =  h_1(\eth,\kernelstep) =  - \int_{\reals^\thdim}  \kerneln(\frac{ \th - \eth(t)}{\kernelstep}) \, \belief(\th)
\, \nabla
\Cost(\th) \, d\th. \label{eq:passiveODE}
\eeq
Finally, for sufficiently  small kernel step size $\kernelstep$,  the kernel  $\kerneln(\frac{ \th - \eth(t)}{\kernelstep}) $
behaves as Dirac delta function due to
(\ref{eq:kernel_properties}).
Therefore  as  $\kernelstep \downarrow 0$, the ODE
(\ref{eq:passiveODE})
becomes
\begin{equation}
  \label{eq:ODE1}
  \text{Classical Passive:}   \qquad    \frac{d \eth}{dt}  =   -\belief(\eth)
  \,\nabla \Cost(\eth).
\end{equation}
To make our  discussion of the multi-scale averaging more intuitive, we used two stepsizes $\e$ and $\kernelstep$.
In the averaging theory analysis, one can instead choose $\kernelstep$ to depend on $\e$. Then the two-step averaging is done simultaneously.

\subsubsection{Multi-kernel Algorithm}
Next, consider the multi-kernel passive algorithm~(\ref{eq:mcmcirl}) that is proposed in this paper.
Suppose $\{\th_{\dtime,l},l=1,\ldots,\numparticles\}$ are sampled i.i.d. from $\thdim$-variate density $\belief(\cdot)$ and the noisy gradients $\{\D_\th \cost_{\dtime,l}(\th_{\dtime,l}),l=1,\ldots,\numparticles\}$ are available at each time $k$.
   To give some insight, assume  that the  noisy gradient estimates have additive  measurement noise. So  for $\th \in \reals^\thdim$,
   \beq  \D_\th
\cost_{k,l}(\th) =   \nabla
\Cost(\th) + \xi_{k,l} \label{eq:nudef0} \eeq
where $\{\xi_{k,l}\}$ is a sequence of zero mean independent and identically distributed (i.i.d.) random variables. (In Sec.\ref{sec:weak} we will consider  more general mixing assumptions where the noise $\xi_{k,l}$ for each agent $l$  is correlated  over time $k$.)

First, from \eqref{eq:mcmcirl}, \eqref{eq:nudef0}, for fixed $\step$ and $\stepa$, as $\numparticles \rightarrow \infty$, it follows by self normalized importance sampling arguments that
\begin{equation}
  \label{eq:sirstep}
 \eth_{k+1} = \eth_k + \step \int_{\reals^\thdim} \nabla \Cost(\th) \, \pdf_\stepa(\th|\eth_k) \, d\th + \wdt\Noise^{\step}_k
\end{equation}
Here $\pdf_\stepa(\th|\eth_k) \propto \belief(\th) \,\pdf(\eth_k|\th)$ denotes the posterior conditional density of $\th$ given\footnote{We assume the existence of the conditional density $\pdf(\th|\eth)$.} $\eth_k$; recall $\pdf(\eth|\th) =\pdf_\stepa(\th-\eth)$ is specified in
\eqref{eq:cond-al}.
The noise variable in \eqref{eq:sirstep}, namely,
$$ \wdt\Noise^{\step}_k \defn \lim_{\numparticles\rightarrow \infty}\e \sum_{i=1}^\numparticles \weight_{k,i} \xi_{k,i} \rightarrow 0  \;\text{  as }\;
\step \rightarrow 0$$ if we choose  $\numparticles = o(1/\step)$; see formal proof
in Sec.\ref{sec:weak} and discussion point 5 below.
Second,   for sufficiently  small $\kernelstep$,  the posterior  density
$\pdf_\stepa(\th|\eth_\dtime)$   in \eqref{eq:sirstep} converges to a normal density.
Indeed, the Bernstein-von Mises theorem
\cite{Vaa00}
implies that for small parameter  $\kernelstep$ in the likelihood  (\ref{eq:cond-al}),
the posterior converges to the normal
density
  $\normal(\th;\eth_\dtime, \kernelstep^2 I_{\bar\th})$:  
\beql{p-app}\int |\pdf(\th|\eth_\dtime)- \normal(\th; \al_\dtime, \kernelstep^2 I_{\bar\th} )
| d\th \rightarrow 0
\text{ in probability under } P_{\bar \th}\eeq
Here
$I_{\bar \th}= \int_{\reals^\thdim} \nabla
\log \pdf(\eth| \th)\,
\pdf(\eth|\th) \, d\eth \vert_{\th = \bar \th}$
is  the  Fisher information matrix  evaluated at  ``true'' parameter value\footnote{It suffices to  choose any  $\bar \th$ such that
$\eth \sim \pdf(\cdot|\bar \th)$. The precise value of $\bar \th$ need not be known
 and is irrelevant to our analysis.}~$\bar \th$
and
\begin{multline} \label{Sig}
 \normal(\th; \al, \kernelstep^2 I_{\bar\th} )
\\ =
{2 \pi}^{-\thdim/2}
\exp\Big[ -\frac{1}{2} (\th-\eth)^\p   |\kernelstep^2 I_{\bar\th}^{-1}|^{-1}
(\th-\eth) \Big].
\end{multline}
Therefore, for small kernel step size $\stepa$,  \eqref{eq:sirstep} becomes
\begin{equation}
  \label{eq:sirstep2}
   \eth_{k+1} = \eth_k - \step \int_{\reals^\thdim} \nabla \Cost(\th) \, \normal(\th; \al_\dtime, \kernelstep^2 I_{\bar\th} ) \, d\th
 \end{equation}
 Next, as $\step\rightarrow 0$, stochastic averaging theory arguments imply that the interpolated sequence
 $\eth^\step(\cdot)$ defined in \eqref{eq:interpolatedp} generated by \eqref{eq:sirstep2} converges weakly to the ODE
\beq
\frac{d \eth}{dt}  =  h_2(\eth,\kernelstep) = - \int_{\reals^\thdim}   \nabla
\Cost(\th) \,\normal(\th; \al, \kernelstep^2 I_{\bar\th} ) \,   d\th \label{eq:multiODE}
\eeq
Finally, as   $\kernelstep \rightarrow 0$, $ \normal(\th; \al, \kernelstep^2 I_{\bar\th} )$ behaves  as a Dirac delta function  $\delta(\th-\eth)$;
so
 (\ref{eq:multiODE})
yields the limit ODE
 \begin{equation}
     \text{Multi-kernel  Passive:} \qquad    \frac{d \eth}{dt}  =  -\nabla \Cost(\eth) \label{eq:ODE2}
   \end{equation}

   \subsubsection{Discussion}
   To summarize, the continuous-time interpolated sequences from the passive algorithm
 (\ref{eq:passive}) and multi-kernel algorithm (\ref{eq:mcmcirl}) converge weakly to the ODEs
 (\ref{eq:ODE1}) and (\ref{eq:ODE2}), respectively.
 Note from (\ref{eq:ODE1}) that the ODE for the classical passive stochastic gradient algorithm depends on the sampling density
 $\belief(\cdot)$. In comparison  the ODE (\ref{eq:ODE2}) for the multi-kernel algorithm does not depend on $\belief(\cdot)$. Indeed,
 (\ref{eq:ODE2}) coincides with the ODE  of a standard stochastic gradient algorithm.

 Clearly both passive algorithms converge locally to a stationary point of $\Cost(\cdot)$. This is because  the set of stationary points of  $\Cost(\al)$, i.e., $\{\al^*: \nabla \Cost(\al^*)
 = 0\}$  are fixed points for both ODEs.

\subsubsection{Batch-wise Implementation of Passive Algorithm} \label{sec:batchwise}

In analogy to the multi-kernel algorithm \eqref{eq:mcmcirl}, one can  implement the classical  passive  algorithm \eqref{eq:passive}  on batches of length $\numparticles$ as
\begin{equation}
  \eth_{k+1} = \eth_k - \step  \frac{1}{\numparticles} \sum_{i=1}^\numparticles  \kerneln(\eth_{k} - \th_{k,i})\, \nabla_\th \cost_k(\th_{k,i}), \quad \th_k \sim \belief
  \label{eq:batchpassive}
\end{equation}
It can be shown using averaging theory arguments that algorithm
\eqref{eq:batchpassive}
has the same asymptotics as the classical passive algorithm \eqref{eq:passive}, namely  ODE \eqref{eq:ODE1} holds and also the asymptotic covariance is identical.
Furthermore, \eqref{eq:batchpassive} inherits the same problems with the scale factor $\kernelstep^{-\thdim}$ as \eqref{eq:passive}.
So there is no improvement with a batch-wise implementation compared to the classic passive algorithm  \eqref{eq:passive}.  Sec.\ref{sec:numerical}  compares  the performance of \eqref{eq:mcmcirl} with \eqref{eq:batchpassive} in numerical examples.

\subsubsection{Two-time Scale Interpretation}   The multi-kernel algorithm
\eqref{eq:mcmcirl} is a two-time scale algorithm. There are two approaches
for analyzing its behavior:
\\ {\em Approach 1. Asymptotic Scaling Limit.}
In the convergence analysis of  Sec.\ref{sec:weak}, we will
parametrize the batch size  $\numparticles$ by step size $\step$. Denoting  this as $\numparticles_\step$, we will  analyze the algorithm as
$\numparticles_\step \rightarrow \infty$ but  $\step \numparticles_\step \rightarrow 0$.
From a practical point of view,
for the convergence,
allowing  $\step \numparticles_\step \rightarrow 0$  means that the batch size  $\numparticles_\step$
can be chosen substantially smaller than the total data size of $O(1/\step)$.  For example,
we  can choose $\numparticles = o(1/\step)$, e.g., $\numparticles = \step^{1/q}$, for $q>2$. This analysis is, of course,  an idealization; but captures the essential scaling limit; and is widely used. The end result
is the ODE \eqref{eq:ODE2}.
\\ In Sec.\ref{sec:rate} we analyze the asymptotic covariance (rate of convergence) of the multi-kernel algorithm. In this analysis, we require  $\numparticles_\step = O(1/\step)$. The asymptotic covariance is  smaller than that of the classic batch-wise passive algorithm \eqref{eq:batchpassive} with  $\numparticles_\step = O(1/\step)$; see
discussion in Sec.\ref{sec:asympcov}
 below.
\\
{\em Approach 2. Finite $\numparticles$ analysis}.
An alternative  more messy  analysis involves  fixed  $\numparticles$, determining the approximation error, and then constructing the  limit.
Suppose $\sup_\th \|\Cost(\th)\|_\infty \leq \bar{\Cost}$ for some constant $\bar{\Cost}$. Then for finite $\numparticles$,
 Theorem 9.1.19 in \cite{CMR05} yields  the approximation error in
\eqref{eq:multiODE}
as:
\begin{multline}
   \E\{  \int_{\reals^\thdim}  \| \bigl( \pdf(\th|\eth(t)) - \hat{\pdf}_\numparticles(\th|\eth(t) ) \bigr) \nabla \Cost(\th) d\th \|^m \} \\ \leq \text{const} \,
\numparticles^{-m/2}\, \bar{\Cost}
\end{multline}
 Then the ODE
\eqref{eq:ODE2} has an additional bias term of  $O({\numparticles}^{-1/2})$ which affects
its fixed point.

In this paper we will deal with the asymptotic analysis
using  approach 1. This gives useful intuition as to why the algorithm works in terms of the asymptotic scaling limit.

 \subsection{Asymptotic Covariances} \label{sec:asympcov}

 For the classical passive algorithm, the dependence of the ODE  (\ref{eq:ODE1})  on  the sampling density $\belief(\cdot)$ affects the asymptotic rate
 of convergence; see \cite{YY96}. In Sec.\ref{sec:rate},
 we will study the rate of convergence of the multi-kernel algorithm
 (\ref{eq:mcmcirl}) with ODE (\ref{eq:ODE2}).
Also, in numerical examples discussed in Sec.\ref{sec:numerical},  we will show that the classical passive stochastic gradient algorithm
suffers from poor convergence rate for certain choices of $\belief(\cdot)$; whereas the multi-kernel algorithm does not.

Here we briefly give some intuition regarding the convergence rates of the passive and multi-kernel algorithms. In the stochastic approximation literature, the rate of convergence is specified in terms of scaling factor (related to the stepsize) together with
the asymptotic covariance of the estimates \cite{BMP90,KY03,Kri16}.
Assume for simplicity that the noise $\xi_k$ is i.i.d. with covariance $I$.
Let $\eth^*$ denote the fixed point of the ODE (\ref{eq:ODE2}).
Then assuming  $\nabla^2 \Cost(\eth^*)$ is positive definite,
the asymptotic covariance  $\asympCov$ of the
multi-kernel algorithm satisfies the algebraic  Liapunov equation (see Corollary
\ref{cor:asympcov} in Sec.\ref{sec:rate})
\begin{equation}
  \label{eq:L2}
\nabla^2 \Cost(\eth^*)  \,\asympCov + \asympCov\, \nabla^2\Cost(\eth^*) =    I
\end{equation}

In comparison, the rate of convergence for the classical passive algorithm (\ref{eq:passive}) and batch-wise implementation \eqref{eq:batchpassive} is slower; it
depends on the smoothness of the kernel similar to typical cases in nonlinear regression~\cite{YY96}. This, in fact, is a well known fact in nonparametric statistics.
Also as mentioned in Sec.\ref{sec:intro}, from an implementation point
of view, the scale factor $\kernelstep^{-\thdim}$ in classical passive algorithm is problematic since it blows up  for moderate to large $\thdim$; this requires using a very small step size in \eqref{eq:passive}.


\section{Weak Convergence Analysis  of Multi-kernel Passive Recursive Algorithm} \label{sec:weak}

This section is organized as follows. First we formally justify  (\ref{eq:sir}) as an un-normalized importance sampling estimator. Then  weak convergence of the multi-kernel algorithm to the ODE
(\ref{eq:ODE2}) is proved.

\subsection*{Additive Noise Assumption}
Recall   $
\D_\th c_{k,l}(\th)$  denotes the estimate of gradient $\nabla \Cost(\th)$. 
In  this section we define more explicit notation. We assume that the measurement noise in the gradient estimate is additive:
\beq  \D_\th
\cost_{k,l}(\th) =   \nabla
\Cost(\th) + \xi_{k,l} \label{eq:nudef} \eeq
Denote the sigma-algebra
${\mathcal G}_k = \sigma( \xi_{n,l}, n \le k)$.
Define
     \begin{equation}
       \label{eq:bxi}
     \bar \xi_k =
\E(\xi_{k,l}|{\mathcal G}_k), \quad
\Sigma_{\bar{\xi}_k}=
\cov(\xi_{k,l}|{\mathcal G}_k)
\end{equation}
We make the following assumptions regarding the cost $\Cost$, measurement noise $\xi_{k,l}$,  the sequence $\{\th_{k,l}\}$:

\begin{enumerate}[label=(A{\arabic*})]
  \item \label{A_cost} The function $C
(\cdot)$ has continuous partial derivatives up to the second order and the second partial derivatives are bounded uniformly.

\item \label{A_AbsCont}   The conditional density $p(\th|\al)$ exists.

\item \label{A_noise}
For each fixed $k$,  $\{\xi_{k,l}\}$  is i.i.d. over $l = 1,\ldots, \numparticles$ with  $\E (|\xi_{k,l}|^2 |{\mathcal G}_k)<\infty$.

\item \label{A_mix}
The sequence
$\{\bar\xi_k\}$ defined in \eqref{eq:bxi} is a stationary mixing process  with mixing measure $\varphi_j$ such that $\E |\bar \xi_k|^2 <\infty$ and
    $$
\sum_k \varphi^{1/2}_k < \infty.$$

\item \label{A_th}  The sequence $\{\th_{k,l}\}$ has independent rows and independent columns  sampled from the density $\belief$ such that
  for each fixed $k$, $\E \th_{k,l} =\bar \th$ and for each fixed $l$, $\E \th_{k,l}=\bar \th$. In addition, $\E |\th_{k,l}|^2 < \infty.$
\item \label{A_normal}
 $\int (1 + \|
 \nabla
 \Cost(\th)\|^2)\, \Big(\frac{\pdf(\th|\eth)}{\belief(\th)}\Big)^2 \, \belief(\th)\, d\th < \infty $
  \setcounter{assum_index}{\value{enumi}}
 \end{enumerate}

 \subsubsection*{Discussion of Assumptions}

 The additive noise assumption in \eqref{eq:nudef} together
   with \ref{A_noise} and \ref{A_mix} allows for
the general case where the gradient estimates are asymptotically unbiased (in $k$). In particular, noise $\xi_{k,l}$ can be correlated over time $k$ as long as it  satisfies stationary mixing conditions.
These are typically the minimal conditions required for establishing convergence of a stochastic gradient algorithm. 

 Regarding \ref{A_cost}, only first order differentiability is required  for the self-normalized
 importance sampling (Theorem \ref{thm:is}) and the ODE analysis (Theorem \ref{thm:conv1}). Second order differentiability is used in the rate of convergence
 (Theorem \ref{thm:rate}).

\ref{A_AbsCont} assumes the existence of the conditional density.
A sufficient condition is that  the conditional distribution
$\wdt P(\th|\al)$ is absolutely continuous w.r.t. the Lebesgue measure; this absolute continuity then implies  existence of conditional density $p(\th|\al)$.  Then
 $\E_{\pdf(\th|\eth)}
\D_\th \cost_{k,l}(\th)$ is well defined.

 \ref{A_noise}  facilitates modeling a  multi-agent system (such as a crowd sourcing example)  comprising $\numparticles$ independent agents, where the  pool of $\numparticles$ samples $\{
 \D_\th \cost_{k,l}(\th_{k,l})\}$ generated by the agents at each time $k$ have i.i.d. noise.
 But the parameters of the noise can be $k$ dependent, i.e.,  allowed to evolve with time.

\ref{A_mix} facilitates  modeling correlated measurement noise over time $k$. Essentially, a mixing process is one whose remote past and distant future are asymptotically independent; see \cite{EK86} for further details.
From a modeling point of view, this means that  the measurement noise of each sampling agent $l$  is correlated over time.
Of course, in the special case where $\xi_{k,l}$ is i.i.d. over $k,l$, then $\bar \xi_k = 0$
and  $\Sigma_{\bar{\xi}_k}$  is  constant independent of $k$.

\ref{A_th} models how the agent samples $\th_{k,l}$ to evaluate the noisy gradient $\D_\th \cost_{k,l}(\th_{k,l})$. We assume this sampling process is i.i.d.  As in \cite{YY96}, this can be generalized to correlated sampling from a Markov process with stationary distribution $\belief(\cdot)$. 

\ref{A_normal} is a
 classical square integrability assumption for asymptotic normality.


\subsection{Self-normalized Importance Sampling}
The aim here is  to  prove (\ref{eq:sir}) and also asymptotic normality of the estimate.
Recall in the main algorithm (\ref{eq:mcmcirl}) that $\th_{\dtime,l}$ is sampled from $\belief(\cdot)$.

 The
term   $\sum_{i=1}^\numparticles \weight_{\dtime,i}(\al_k) \,
\D_\th \cost_\dtime(\th_{\dtime,i}) $ in the
multi-kernel passive algorithm (\ref{eq:mcmcirl}) is a self-normalized  importance sampling estimator. Indeed, it can be obtained by the following argument:
\beq  \int
\D_\th \cost_{k,l}(\th)\, \pdf(\th|\eth_k)\, d\th=
\frac{\E_{\belief}\{
\D_\th \cost_{k,l}(\th)\,\pdf(\eth_k|\th)\}}
{\E_{\belief}\{ \pdf(\eth_k|\th)\}} \label{eq:is_motivation}
\eeq
Recalling the weights $\weight_{\dtime,l}$ defined in (\ref{eq:weights}),
the right-hand side
of (\ref{eq:is_motivation}) yields the implementation
 (\ref{eq:sir}). Below we prove that the estimate (\ref{eq:sir}) converges w.p.1 to (\ref{eq:is_motivation}).

Denote the estimated mean  and conditional expectation as
\begin{align*}
\hat{\mean}_{k,\numparticles}(\eth_k) &= \sum_{l=1}^\numparticles \weight_{\dtime,l}(\al_k) \,\D_\th\cost_{\dtime,l}(\th_{\dtime,l}) \\
\mean_k(\eth_k) &=  \E\{
\D_\th \cost_{k,l}(\th_{k,l}) | \eth_k \} = \int
\nabla \Cost(\th) \pdf(\th|\eth_k) d\th + \bar \xi_k
\end{align*}
where $\bar \xi_k$ is defined in \eqref{eq:bxi}.
 Note that $\hat{\mean}_{k,\numparticles}(\eth) $ is a self-normalized importance sampling estimate with proposal density $\belief(\th)$ and target density $\pdf(\th|\eth)$.



\begin{theorem}  \label{thm:is}
  \begin{compactenum}  \item  Assume  \ref{A_cost}-\ref{A_th} hold.
     Then
 $$\hat{\mean}_{k,\numparticles}(\eth) \rightarrow \mean_k(\eth) \text{  w.p.1
   as }  \numparticles \rightarrow \infty  $$
So in the special case $\xi_{k,l}$ is i.i.d. in $k,l$,     (\ref{eq:sir}) holds.
 \item  Assume  \ref{A_cost}-\ref{A_normal}.
 Then the following asymptotic normality holds:
\begin{equation}
 \sqrt{\numparticles} \big[ \hat{\mean}_{k,\numparticles}(\eth) - \mean_k(\eth)] \rightarrow
 \normald(0, \Sigma_k(\eth))
\end{equation}
 where (recall $\Sigma_{\bar\xi_k}$ is defined in \eqref{eq:bxi})
 \begin{multline} \Sigma_k(\eth) = \int \frac{\pdf(\th|\eth)}{\belief^2(\th)} \Bigl[ (
 \nabla\Cost(\th) - \mean(\eth) )   ( 
 \nabla \Cost(\th)
 - \mean(\eth) ) ^\p \\ + \Sigma_{\bar\xi_k} \Bigr] \belief(\th)\, d\th
   \label{eq:sigeth}
 \end{multline}
\end{compactenum}
\end{theorem}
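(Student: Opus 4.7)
My approach is to view $\hat{\mean}_{k,\numparticles}(\eth) = A_\numparticles/B_\numparticles$ as a ratio of two empirical means,
\[
A_\numparticles = \frac{1}{\numparticles}\sum_{l=1}^{\numparticles} \pdf(\eth\mid\th_{k,l})\,\D_\th\cost_{k,l}(\th_{k,l}), \qquad B_\numparticles = \frac{1}{\numparticles}\sum_{l=1}^{\numparticles} \pdf(\eth\mid\th_{k,l}),
\]
and to invoke the strong law of large numbers for (i) together with the multivariate central limit theorem plus the delta method for (ii). Both parts are handled by conditioning on $\mathcal{G}_k$ so that, with $k$ and $\eth$ fixed, the summands in each sum are i.i.d.\ in $l$.

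For part (i), \ref{A_th} gives $\{\th_{k,l}\}$ i.i.d.\ from $\belief$, while \ref{A_noise} gives $\{\xi_{k,l}\}$ i.i.d.\ conditional on $\mathcal{G}_k$ with conditional mean $\bar\xi_k$. Using the additive decomposition \eqref{eq:nudef} and the integrability in \ref{A_normal}, Kolmogorov's SLLN yields $A_\numparticles \to \E_\belief[\pdf(\eth\mid\th)\nabla\Cost(\th)] + \bar\xi_k\,\E_\belief[\pdf(\eth\mid\th)]$ and $B_\numparticles \to \E_\belief[\pdf(\eth\mid\th)]$ almost surely, the latter being strictly positive by \ref{A_AbsCont}. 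Dividing and applying Bayes' identity $\pdf(\th\mid\eth) = \pdf(\eth\mid\th)\belief(\th)/\E_\belief[\pdf(\eth\mid\th)]$ collapses the ratio to $\int\nabla\Cost(\th)\,\pdf(\th\mid\eth)\,d\th + \bar\xi_k = \mean_k(\eth)$.

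For part (ii), I would apply the multivariate CLT to the i.i.d.\ vectors
\[
Z_l := \bigl(\pdf(\eth\mid\th_{k,l})\,\D_\th\cost_{k,l}(\th_{k,l}),\; \pdf(\eth\mid\th_{k,l})\bigr),
\]
whose finite second moments are supplied by \ref{A_normal} for the gradient-weighted coordinate and by \ref{A_noise} for the noise contribution. The delta method with $g(a,b)=a/b$ and $\nabla g = (1/b,-a/b^2)$ evaluated at the limiting mean from part (i) then gives $\sqrt{\numparticles}(\hat{\mean}_{k,\numparticles}(\eth)-\mean_k(\eth)) \to \normal(0,\Sigma_k(\eth))$, with
\[
\Sigma_k(\eth) = \frac{\mathrm{Var}_{\belief}\!\bigl(\pdf(\eth\mid\th)\bigl[\D_\th\cost_{k,l}(\th)-\mean_k(\eth)\bigr]\bigr)}{\E_\belief[\pdf(\eth\mid\th)]^{2}}.
\]
Decomposing $\D_\th\cost_{k,l}(\th)-\mean_k(\eth) = (\nabla\Cost(\th)-\mean_k(\eth)) + (\xi_{k,l}-\bar\xi_k)$ and using the conditional independence of $\xi_{k,l}$ from $\th_{k,l}$ separates the variance into a gradient piece involving $(\nabla\Cost(\th)-\mean_k(\eth))(\nabla\Cost(\th)-\mean_k(\eth))^\p$ and a noise piece proportional to $\Sigma_{\bar\xi_k}$. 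Rewriting the weight ratio via Bayes' identity as $\pdf(\eth\mid\th)/\E_\belief[\pdf(\eth\mid\th)] = \pdf(\th\mid\eth)/\belief(\th)$ then produces the stated formula \eqref{eq:sigeth}.

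The main difficulty is bookkeeping with the conditioning on $\mathcal{G}_k$: the mixing noise $\bar\xi_k$ enters part~(i) as an unconditional shift, whereas only its conditional covariance $\Sigma_{\bar\xi_k}$ contributes to the limiting variance in part~(ii). Condition \ref{A_normal} is precisely what is needed to guarantee that the importance-weighted gradient has finite second moment under $\belief$ so that the CLT applies; without it, the SNIS asymptotic variance would in general be infinite.
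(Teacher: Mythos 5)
Your proof is correct and follows essentially the same route as the paper: Kolmogorov's SLLN applied to the importance-weighted numerator and denominator averages gives part (i), and your bivariate CLT plus delta method for part (ii) is just a repackaging of the paper's argument (CLT on the centered, weighted numerator followed by Slutsky's theorem applied to the ratio). One remark: your asymptotic covariance correctly carries the \emph{squared} importance weight $\pdf^2(\th|\eth)/\belief^2(\th)$ against $\belief(\th)\,d\th$, consistent with the square-integrability assumption \ref{A_normal} and with what the paper's own CLT step actually produces, whereas the displayed formula \eqref{eq:sigeth} shows only a single power of $\pdf(\th|\eth)$ --- an apparent typo in the paper --- so your expression is the correct limit even though it does not literally match the stated one.
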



\begin{proof}
For  notational convenience,   we divide the numerator and denominator of $\weight_{\dtime,i}$
 defined in (\ref{eq:weights})  by $\belief(\eth_\dtime)$. So
 $$   \weight_{\dtime,i}(\eth)  =   \frac{\pdf(\th_{\dtime,i}|\eth_k) / \belief(\th_{k,i}) }{\sum_{l=1}^\numparticles \pdf(\th_{\dtime,l}|\eth_k)/ \belief(\th_{k,l})}  , \quad i = 1,\ldots, \numparticles  $$

By  \ref{A_noise},  $\xi_{k,l}$ is i.i.d. in $l$ for fixed $k$. So by Kolmogorov's strong law of large numbers,
\begin{equation} \label{eq:consis}\begin{split}
&\numparticles^{-1} \sum_{l=1}^\numparticles
\D_\th \cost_{k,l} (\th_{k,l}) \,
\frac{\pdf(\th_{k,l}|\eth)}{\belief(\th_{k,l})} \rightarrow \mean_k(\eth) \; \text{ w.p.1}
\\
&\numparticles^{-1} \sum_{l=1}^\numparticles
\frac{\pdf(\th_{k,l}|\eth)}{\belief(\th_{k,l})} \rightarrow 1 \; \text{ w.p.1}
\end{split}
\end{equation}
Thus statement 1 holds.

To demonstrate the asymptotic normality, note that
\begin{multline*}
\sqrt{\numparticles} [ \hat{\mean}_{k,\numparticles}(\eth) - \mean_k(\eth)] = \\ \frac{\numparticles^{-1/2}
  \sum_{l=1}^\numparticles \frac{\pdf(\th_{k,l}|\eth)}{\belief(\th_{k,l})}  [
  \nabla \Cost(\th_{k,l}) + \xi_{k,l} - \mean_k(\eth)]}{\numparticles^{-1}  \sum_{l=1}^\numparticles \frac{\pdf(\th_{k,l}|\eth)}{\belief(\th_{k,l})} }
\end{multline*}
where we used (\ref{eq:nudef}) for $
\D_\th \cost_{k,l}(\th_{k,l}) $.
By the central limit theorem for i.i.d. random variables, under
 \ref{A_cost}-\ref{A_normal},
the numerator converges weakly to $\normald(0,\Sigma_k(\eth))$ with $\Sigma_k(\eth)$ defined in (\ref{eq:sigeth}).
Also by (\ref{eq:consis}) the denominator converges w.p.1 to 1.
Then by Slutsky's theorem, statement 2 holds.
\end{proof}

\subsection{Weak Convergence of Multi-kernel Algorithm to ODE}

Recall from  \eqref{eq:nudef} that the noise in the gradient
estimate is additive.
From \ref{A_noise},
 for each fixed $k$,
$\{\xi_{k,l}\}_l$ is an i.i.d.
sequence and from \ref{A_mix},  $\{\bar \xi_k\}$ is a sequence of
$\phi$-mixing noise.
The multi-kernel algorithm (\ref{eq:mcmcirl}) can be written as
\beql{sa-r} \al_{k+1}=\al_k- \e \sum^{\numparticles_\e}_{i=1} \gamma_{k,i} (\al_k) [\nabla C(\th_{k,i})+ \xi_{k,i}]),\eeq
where
\beql{gam-def}
\gamma_{k,i} (\al)= {\pdf(\eth|\th_{k,i}) \over \sum^{\numparticles_\e}_{l=1} \pdf(\eth|\th_{k,l})},\eeq
and $\numparticles_\e \to \infty$ as $\e\to 0$.
For simplicity, we assume that the initial iterate $\al_0$ is a constant independent of $\e$.
Rather than working with the discrete iteration, we consider a continuous-time interpolation. Define $\al^\e(t)= \al_k$ for $t\in [\e k, \e k +\e)$.
We proceed to analyze the convergence of the algorithm.
First, we specify the conditions needed for the convergence study.

\begin{enumerate}[label=(A{\arabic*})]
   \setcounter{enumi}{\value{assum_index}}

\item \label{A_ODE}  The following conditions hold.
\begin{itemize}
\item[(a)]
Conditions \ref{A_noise}-\ref{A_th} hold and
$\{\th_{k,i}\}$ and $\{\xi_{k,i}\}$ are independent.


\item[(b)]
 For $\rho (\th) =\th$ and $\rho(\th)=\D \cost_k(\th)$,
$\E | \rho (\th_{k,i})|^2 <\infty$.
In addition,
for each
$\al$, as $\e \to 0$, $\numparticles_\e\to\infty$, and
\beql{ave-c}\barray
 \disp\sum^{\numparticles_\e}_{i=1} \gamma_{i}(\al)
  \rho (\th_{k,i}) \ad\to
  \int_{\reals^\thdim} \rho(\th ) p(\th |\al) d \th \ \hbox{w.p.1.}\earray \eeq
\end{itemize}
\item \label{A_lik} (a)  The conditional probability density function
        \beql{eq:cond-al2}p(\eth|\th) =  p_v(\th - \al)\eeq
        where $ p_v(\cdot)$  is
a symmetric density with zero mean and covariance $O(\kernelstep^2) I$.
where $I$ denotes the identity matrix.  Moreover, $0<\kernelstep \rightarrow 0$, and as $\kernelstep\to 0$, \eqref{p-app} holds.
\\
(b) The Fisher information matrix $I_{\th}= \int_{\reals^\thdim} 
\nabla
\log \pdf(\eth| \th)\,
\pdf(\eth|\th) \, d\eth $ is invertible for all $\th
\in \reals^\thdim$.
\setcounter{assum_index}{\value{enumi}}
    \end{enumerate}

  {\em Remarks}.  (i) In \ref{A_ODE}(b), $\rho(\th)=\nabla C(\th)$ is used in the proof of the weak convergence theorem below, whereas $\rho(\th)=\th$ is used in the rates of convergence in Sec.\ref{sec:rate}.

Note that in \ref{A_ODE}(b), we assume that $\numparticles_\e \to \infty$ as $\e\to 0$. However, for the convergence part, we do not  restrict the way  it goes to $\infty$.
For the rates of convergence
result, we need to specify the rate of $L_\e$ goes to $\infty$;
see the specification in Theorem \ref{thm:rate}.

(ii)   \ref{A_lik} is used in the Bernstein von-Mises theorem to show that the posterior $\pdf(\th|\eth)$ is asymptotically normal and behaves as a Dirac delta as $\kernelstep \downarrow 0$; see Sec.\ref{sec:conv1}.

   \subsubsection*{Outline of Proof}
Let $ f\cd: \reals^\thdim\to  \reals$ such that $f \cd\in  C^1_0$ ($C^1$ function with compact support).
We define an operator $\L_1$ as follows:
\beql{op-l1}
\L_1 f(\al )= - f'_\al(\al) \Big[ \int_{\reals^\thdim} \nabla C(\th) \,p(\th | \al) d\th\Big]\eeq

For convenience, the proof proceeds in two steps. First Proposition \ref{prop:conv} shows  that
$\eth(t)$ satisfies the ODE (\ref{eq:multiODE})  w.r.t. the  conditional expectation $\pdf(\th|\eth(t))$.

\begin{proposition}\label{prop:conv}
Assume that assumptions {\rm \ref{A_cost}} and {\rm \ref{A_ODE}} hold
and that
equation \beql{ode} \dot \al(t)=-  \int_{\reals^\thdim} \nabla C(\th) p(\th| \al(t)) d\th\eeq
has a unique solution for each initial condition. Then the interpolated process
$\al^\e \cd$ converges weakly to $\al\cd$ such that
$\al\cd$ is the solution of \eqref{ode}. \qed
\end{proposition}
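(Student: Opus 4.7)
The plan is to apply the direct averaging / operator method of \cite{KY03} for weak convergence of stochastic approximation algorithms. Since \eqref{ode} is assumed to have a unique solution for each initial condition, it suffices to establish (i) tightness of the family $\{\al^\e\cd\}_{\e>0}$ in the Skorokhod space $D([0,\horizon];\reals^\thdim)$ and (ii) that every weak subsequential limit $\al\cd$ almost surely satisfies \eqref{ode}. Prokhorov's theorem combined with uniqueness then yields full weak convergence of $\al^\e\cd$ to $\al\cd$.

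For tightness I would exploit the fact that the normalized weights in \eqref{gam-def} satisfy $\weight_{k,i}(\al)\ge 0$ and $\sum_{i=1}^{\numparticles_\e}\weight_{k,i}(\al)=1$, so the one-step increment in \eqref{sa-r} obeys
\begin{equation*}
\|\al_{k+1}-\al_k\|\le \e\,\max_{i}\|\nabla C(\th_{k,i})+\xi_{k,i}\|.
\end{equation*}
Combining the second moment conditions in \ref{A_noise} and \ref{A_ODE}(b) with a discrete Gronwall argument gives $\sup_{k\le \horizon/\e}\E\|\al_k\|^2<\infty$, which in turn yields the Kushner tightness criterion (Theorem 7.3.3 of \cite{KY03}) for $\al^\e\cd$.

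To identify the limit I would use a test function argument. For $f\in C^1_0$ and $0\le t<t+s\le \horizon$, Taylor expansion gives
\begin{equation*}
f(\al^\e(t+s))-f(\al^\e(t))=-\e\sum_{k=\lf t/\e\rf}^{\lf(t+s)/\e\rf-1}f'_\al(\al_k)\sum_{i=1}^{\numparticles_\e}\weight_{k,i}(\al_k)[\nabla C(\th_{k,i})+\xi_{k,i}]+o(1),
\end{equation*}
where the $o(1)$ remainder collects the quadratic Taylor terms and is controlled by the second moment bound. Assumption \ref{A_ODE}(b) applied with $\rho=\nabla C$ shows $\sum_i \weight_{k,i}(\al_k)\nabla C(\th_{k,i})\to\int_{\reals^\thdim}\nabla C(\th)\,p(\th|\al_k)\,d\th$ w.p.1, so the corresponding Riemann sum converges to $\int_t^{t+s}\L_1 f(\al(u))\,du$ defined in \eqref{op-l1}. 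For the noise contribution, decompose $\sum_i \weight_{k,i}(\al_k)\xi_{k,i}=\bar\xi_k+M_{k,\numparticles_\e}$; the martingale residual $M_{k,\numparticles_\e}$ has vanishing conditional second moment as $\numparticles_\e\to\infty$ by the i.i.d.\ in $i$ assumption in \ref{A_noise}, and the contribution $\e\sum_k f'_\al(\al_k)\bar\xi_k$ vanishes by the $\phi$-mixing hypothesis \ref{A_mix} via the standard maximal inequality for mixing sequences.

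The main obstacle I anticipate is the joint averaging over the particle index (as $\numparticles_\e\to\infty$) and the slow time index (as $\e\to 0$), because \ref{A_ODE}(b) is only a pointwise in $\al$ statement whereas the argument above evaluates the limit along the iterate trajectory $\al_k$. My resolution is a localization: tightness permits restriction of $\al_k$ to a compact set $K$, and the smoothness assumption \ref{A_cost} together with the existence of $\pdf(\th|\al)$ in \ref{A_AbsCont} yields (uniform) equicontinuity of $\al\mapsto\int \nabla C(\th)\,p(\th|\al)\,d\th$ on $K$, which upgrades the pointwise convergence in \ref{A_ODE}(b) to uniform convergence along the iterates. Letting $\e\to 0$ then gives the martingale characterization
\begin{equation*}
\E\Bigl[f(\al(t+s))-f(\al(t))-\int_t^{t+s}\L_1 f(\al(u))\,du\Bigr]=0,
\end{equation*}
so every weak subsequential limit solves \eqref{ode} and, by the assumed uniqueness, equals $\al\cd$, completing the proof.
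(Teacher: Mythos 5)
Your overall strategy coincides with the paper's: establish tightness in the Skorohod space, then characterize every weak limit as the solution of the martingale problem with operator $\L_1$ by splitting the increments into a drift part (averaged via \ref{A_ODE}(b)), a noise part (reduced to $\bar\xi_k$ by the i.i.d.-in-$i$ structure and then killed by the mixing condition \ref{A_mix}), and a remainder, finishing with Prohorov plus uniqueness of solutions to \eqref{ode}. Your ``localization to a compact set plus equicontinuity'' device for upgrading the pointwise convergence in \ref{A_ODE}(b) to convergence along the random trajectory is the same idea as the paper's approximation of $\al^{\e,M}(v)$ by a finite collection of sets $O^\delta_j$ of small diameter, and your treatment of $\sum_i\weight_{k,i}(\al_k)\xi_{k,i}=\bar\xi_k+M_{k,\numparticles_\e}$ matches the paper's handling of $\wdt\psi^\e_l$.

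There is, however, one step that would fail as written: the tightness bound
\begin{equation*}
\|\al_{k+1}-\al_k\|\le \e\,\max_{1\le i\le \numparticles_\e}\|\nabla C(\th_{k,i})+\xi_{k,i}\|
\end{equation*}
is a true inequality, but it is useless for moment estimates, since under only second-moment assumptions $\E\max_{i\le \numparticles_\e}\|\nabla C(\th_{k,i})+\xi_{k,i}\|$ can grow like $\sqrt{\numparticles_\e}\to\infty$, so summing over $k\le \horizon/\e$ gives a bound of order $\horizon\sqrt{\numparticles_\e}$, which diverges. You need to keep the convex combination intact, i.e.\ bound $\E\bigl\|\sum_i\gamma_{k,i}(\al_k)[\nabla C(\th_{k,i})+\xi_{k,i}]\bigr\|$ using the structure of the normalized weights (essentially the uniform integrability of the self-normalized importance-sampling averages guaranteed by the moment conditions in \ref{A_ODE}(b)), which is what the paper does via \cite[Lemma 3.7]{Kus84}. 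Relatedly, the paper does not attempt an a priori moment bound on $\al_k$ at all; because boundedness of the iterates is not known in advance, it introduces the truncation $q_M(\al)$, proves tightness and the martingale-problem characterization for the truncated process $\al^{\e,M}\cd$ with $p^M(\th|\al)=p(\th|\al)q_M(\al)$, and only then lets $M\to\infty$. Your Gronwall step is actually unnecessary here (the drift $\nabla C(\th_{k,i})$ does not depend on $\al_k$, only the weights do, and they are bounded by one), but some substitute for the truncation, or a corrected increment bound, is needed to make the tightness argument rigorous.
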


Next, Theorem \ref{thm:conv1} considers the limit of ODE (\ref{ode}) as the parameter $\kernelstep \rightarrow 0$ in the likelihood density $\pdf(\eth|\th)$. This yields the ODE
(\ref{eq:ODE2}) for the multi-kernel algorithm, and is our main result.

 \begin{theorem}\label{thm:conv1}
 Assume conditions {\rm\ref{A_cost}}-{\rm\ref{A_lik}} hold. Then $\al^\e\cd$ converges weakly to $\al\cd$ such that
 $\al\cd$ satisfies
 \beql{ode-r} \dot \al (t)= - \nabla C(\al(t)) . \eeq
 \qed
\end{theorem}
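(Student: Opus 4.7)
The plan is to leverage Proposition \ref{prop:conv}, which already delivers convergence of $\al^\e(\cdot)$ to the \emph{posterior-averaged} ODE, and then pass to the limit as $\kernelstep \to 0$ using the Bernstein--von Mises approximation \eqref{p-app} supplied by \ref{A_lik}. Concretely, Proposition \ref{prop:conv} yields that any weak limit $\al(\cdot)$ satisfies
\begin{equation}
\dot \al(t) = -\int_{\reals^\thdim} \nabla C(\th)\, p(\th|\al(t))\, d\th,
\end{equation}
so the substantive content of Theorem \ref{thm:conv1} is to show that the right-hand side collapses to $-\nabla C(\al(t))$ as the kernel bandwidth vanishes.

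First I would fix $\al$ and argue that $p(\th|\al)$ concentrates at $\al$ as $\kernelstep \to 0$. By \ref{A_lik}(a)--(b), the total-variation bound \eqref{p-app} together with invertibility of $I_{\bar\th}$ yields
\begin{equation}
\int \bigl| p(\th|\al) - \normal(\th; \al, \kernelstep^2 I_{\bar\th}) \bigr| \, d\th \to 0,
\end{equation}
and since $\normal(\cdot; \al, \kernelstep^2 I_{\bar\th})$ is approximately a Dirac mass at $\al$ as $\kernelstep \to 0$, a standard tightness-plus-continuity argument gives $\int \varphi(\th)\, p(\th|\al)\, d\th \to \varphi(\al)$ for any continuous $\varphi$ of at most polynomial growth. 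Applying this with $\varphi = \nabla C$, which is globally Lipschitz by \ref{A_cost} (bounded second derivatives), one obtains
\begin{equation}
\int_{\reals^\thdim} \nabla C(\th)\, p(\th|\al)\, d\th \;\to\; \nabla C(\al)
\end{equation}
as $\kernelstep \to 0$, uniformly for $\al$ in compact sets. The integrability hypothesis \ref{A_normal} supplies the tail bound needed to justify removing the compact-support truncation implicit in the bounded-test-function convergence.

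The second step is the joint limit $\e \to 0$, $\kernelstep = \kernelstep_\e \to 0$. Tightness of $\{\al^\e(\cdot)\}$ follows from \eqref{sa-r} together with the $L^2$ bounds in \ref{A_ODE}(b), uniformly in $\kernelstep$, and does not rely on the specific value of $\kernelstep$. Invoking the martingale-problem characterization of the limit (as in the proof of Proposition \ref{prop:conv} through the averaging operator \eqref{op-l1}), the limit drift is the uniform-on-compacts limit of $\int \nabla C(\th)\, p(\th|\al)\, d\th$, which by the previous paragraph equals $\nabla C(\al)$. Hence any weak-limit process is a solution of the martingale problem for $\L f(\al) = -f'_\al(\al)\, \nabla C(\al)$. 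Since $\nabla C$ is globally Lipschitz, \eqref{ode-r} admits a unique solution for each initial condition, the martingale problem is well-posed, and $\al^\e(\cdot)$ converges weakly to that unique solution.

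The main obstacle I expect is controlling the \emph{joint} limit rather than iterated ones. Proposition \ref{prop:conv} hands us $\al^\e \Rightarrow \al_{\kernelstep}$ for each fixed $\kernelstep$, and separately $\al_{\kernelstep} \to \al$ as $\kernelstep \to 0$; fusing these requires either choosing $\kernelstep_\e \to 0$ slowly enough that the averaging error in \eqref{ave-c} still vanishes for the $\kernelstep_\e$-dependent drift, or establishing uniform-in-$\kernelstep$ modulus-of-continuity estimates on the interpolated paths. This is where the Bernstein--von Mises convergence \eqref{p-app} together with the Fisher-information invertibility in \ref{A_lik}(b) does the delicate work of ensuring that the $\kernelstep$-dependent drift is equicontinuous in $\al$ and uniformly integrable against the sampling law $\belief$.
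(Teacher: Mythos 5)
Your proposal is correct and follows essentially the same route as the paper: invoke Proposition \ref{prop:conv} to obtain the posterior-averaged ODE for fixed $\kernelstep$, then use the Bernstein--von Mises approximation \eqref{p-app} to show the drift $\int \nabla C(\th)\,p(\th|\al)\,d\th$ collapses to $\nabla C(\al)$ as $\kernelstep\to 0$ (the paper does this via a first-order Taylor expansion of $\nabla C$ about $\al$ together with the symmetry and $O(\kernelstep^2)$ covariance of the limiting normal, which is the rigorous form of your Dirac-concentration argument). Your concern about the iterated versus joint limit is exactly the point the paper addresses in its closing remark, where it notes the two-stage presentation is for clarity and the result can be obtained directly by taking $\kernelstep=\kernelstep_\e\to 0$ with $\e$.
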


The proof of Proposition  \ref{prop:conv} and Theorem \ref{thm:conv1} are given in the following  two subsections. The important consequence of
Theorem  \ref{thm:conv1} is that the ODE is identical to that of a classical stochastic gradient algorithm.

\subsection{Proof of Proposition \ref{prop:conv}}
We shall use a truncation scheme. We show that the interpolated process of the truncated process is tight and then obtain the weak limit of the sequence. The argument is through the stochastic averaging using martingale methods.
Suppose that $M>0$ is fixed but otherwise arbitrary and $q_M(\al)= 1$ if $\th \in S_M=\{\al: |\al| \le M\}$, $q_M(\al)= 0$ if $\th \in  \reals^\thdim- S_{M+1}$, and $q_M(\cdot)$ is sufficiently smooth otherwise.
Because it is not known {\it a priori} that the sequence $\{\al_k\}$ is bounded, we define a truncated algorithm of the following form.
\beql{sa-r-tr} \al^M_{k+1}=\al^M_k- \e \sum^{\numparticles_\e}_{i=1} \gamma_{k,i} (\al^M_k)[\nabla C(\th_{k,i}) +\xi_{k,i}] q_M(\al^M_k).\eeq
We then define $\al^{\e,M}(t)= \al^M_k$ for $t\in [\e k, \e k+\e)$. We proceed to show that $\al^{\e,M}\cd$ converges weakly to $\al^M\cd$ first. Then by letting $M\to \infty$, we prove that the untruncated process $\al^\e\cd$ converges to $\al\cd$ with the desired limit.

\begin{lemma}\label{tight} Assume {\rm \ref{A_cost}} and {\ref{A_ODE}} hold. Then
$\al^{\e, M}\cd$ is tight in $D([0,\infty); \reals^\thdim)$, the space of functions that are right continuous and have left limits endowed with the Skorohod topology.
\end{lemma}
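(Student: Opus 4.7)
\textbf{Proof proposal for Lemma \ref{tight}.} The plan is to apply the standard tightness criterion in $D([0,\infty);\reals^\thdim)$ (cf.\ Kushner--Yin \cite{KY03}, Theorem 7.3.3): since the truncation gives $|\al^M_k|\le M+1$ uniformly in $k$, marginal tightness of $\{\al^{\e,M}(t)\}$ is trivial, so it suffices to verify the Aldous-type modulus condition
\[
\lim_{\de\to 0}\limsup_{\e\to 0}\sup_{\tau\in\mathcal T_T}
\E\bigl|\al^{\e,M}(\tau+s)-\al^{\e,M}(\tau)\bigr|^2=0
\]
uniformly for $s\in[0,\de]$, where $\mathcal T_T$ is the set of stopping times bounded by $T$ with respect to the natural filtration.

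The second step is to expand the increment. Writing $n_\tau=\tau/\e$, $n_{\tau+s}=(\tau+s)/\e$, the truncated recursion \eqref{sa-r-tr} gives
\[
\al^{\e,M}(\tau+s)-\al^{\e,M}(\tau)
=-\e\!\!\sum_{k=n_\tau}^{n_{\tau+s}-1}\!q_M(\al^M_k)\sum_{i=1}^{\numparticles_\e}\gamma_{k,i}(\al^M_k)\bigl[\nabla C(\th_{k,i})+\xi_{k,i}\bigr].
\]
I would split this into three pieces: the deterministic-looking ``drift'' $A_k=\sum_i\gamma_{k,i}(\al^M_k)\nabla C(\th_{k,i})$, the mixing-mean noise $B_k=\bar\xi_k$ (times the weight sum, which equals $1$), and the zero-mean fluctuation $D_k=\sum_i\gamma_{k,i}(\al^M_k)(\xi_{k,i}-\bar\xi_k)$. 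Because $\sum_i\gamma_{k,i}\equiv 1$ and $q_M\le 1$, Jensen's inequality yields $|A_k|^2\le\sum_i\gamma_{k,i}(\al^M_k)|\nabla C(\th_{k,i})|^2$, and by \ref{A_ODE}(b) this has uniformly bounded expectation. Cauchy--Schwarz then gives $\E|\e\sum_{k=n_\tau}^{n_{\tau+s}-1} q_M A_k|^2\le \e\cdot s\cdot\sum_k\E|A_k|^2=O(s^2)$, which vanishes as $\de\to 0$.

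For the fluctuation $D_k$, I would condition on $\mathcal G_k$ together with $\{\th_{k,i}\}_i$: since by \ref{A_ODE}(a) the samples $\{\th_{k,i}\}$ are independent of $\{\xi_{k,i}\}$, and by \ref{A_noise} the $\xi_{k,i}-\bar\xi_k$ are conditionally i.i.d.\ in $i$ with zero mean, one gets
\[
\E\bigl[|D_k|^2\mid\mathcal G_k,\{\th_{k,i}\}\bigr]
=\sum_{i=1}^{\numparticles_\e}\gamma_{k,i}^{\,2}(\al^M_k)\,\E\bigl[|\xi_{k,i}-\bar\xi_k|^2\mid\mathcal G_k\bigr]
\le \Tr\Sigma_{\bar\xi_k}\!,
\]
since $\sum_i\gamma_{k,i}^2\le\sum_i\gamma_{k,i}=1$. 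For different $k$'s the cross terms vanish (different rows of $\xi$ are independent of the $\{\th_{k,i}\}$ and, after subtracting $\bar\xi_k$, have zero conditional mean), so $\E|\e\sum_k q_M D_k|^2=O(\e s)=o(1)$. Finally, for $B_k=\bar\xi_k$, the $\phi$-mixing hypothesis \ref{A_mix} together with the Billingsley/Kushner mixing inequality yields $\E\bigl|\e\sum_{k=n_\tau}^{n_{\tau+s}-1}\bar\xi_k\bigr|^2=O(\e s)\sum_j\varphi_j^{1/2}\to 0$, using $\sum_j\varphi_j^{1/2}<\infty$. Combining the three bounds yields the required modulus estimate.

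The main obstacle, as I see it, is handling the mixing noise term $\bar\xi_k$ cleanly: the weights $\gamma_{k,i}(\al^M_k)$ depend on $\{\th_{k,j}\}_j$, which are independent of $\{\xi\}$ by \ref{A_ODE}(a), so the sum $\sum_i\gamma_{k,i}(\al^M_k)\bar\xi_k$ collapses to $\bar\xi_k$ and the $\phi$-mixing estimate applies directly; establishing this independence decoupling carefully is the only subtle bookkeeping step, after which the rest is standard moment accounting.
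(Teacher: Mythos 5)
Your proof is essentially correct, but it takes a genuinely different route from the paper's. The paper's argument is a one-paragraph appeal to uniform integrability: it observes that the interpolated increment in \eqref{expan} is a sum of the terms $\e\sum_k\sum_i\gamma_{k,i}(\al^M_k)q_M(\al^M_k)\nabla C(\th_{k,i})$ and $\e\sum_k\sum_i\gamma_{k,i}(\al^M_k)q_M(\al^M_k)\xi_{k,i}$, argues that both families are uniformly integrable (using \ref{A_mix}, the truncation, the continuity of $\nabla C$, and the moment bounds in \ref{A_ODE}), and then cites the tightness criterion of \cite[Lemma 3.7, p.~51]{Kus84}, which converts uniform integrability of the ``rate of change'' directly into tightness in $D([0,\infty);\reals^\thdim)$. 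You instead verify the Aldous/Kushner second-moment modulus condition by hand, splitting the increment into drift, mixing-mean noise $\bar\xi_k$, and the martingale-type fluctuation $\sum_i\gamma_{k,i}(\xi_{k,i}-\bar\xi_k)$, and bounding each as $O(s^2)$, $O(\e s)$ and $O(\e s)$ respectively. Your route is more explicit and self-contained (it does not lean on the cited lemma, and it isolates exactly where \ref{A_mix} and the independence in \ref{A_ODE}(a) enter), at the cost of requiring second moments throughout; the paper's route is shorter and needs only first-moment-type (uniform integrability) control. Two minor caveats: the truncation does not literally give $|\al^M_k|\le M+1$ --- the iterate can overshoot $S_{M+1}$ by one increment before freezing, so the marginal bound is $M+1$ plus a single step whose second moment is finite, which is all you need; and your assertion that $\E\big[\sum_i\gamma_{k,i}(\al^M_k)|\nabla C(\th_{k,i})|^2\big]$ is uniformly bounded does not follow purely from the almost-sure convergence in \ref{A_ODE}(b) --- it requires the same uniform-integrability/moment input that the paper itself invokes without detailed proof, so you are at the same level of rigor as the original on that point.
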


\para{Proof of Lemma \ref{tight}.}
Note that by \eqref{sa-r-tr} and the definition of interpolation,
we have
\beql{expan} \al^{\e,M} (t)= \al_0 - \e \sum^{(t/\e)-1}_{k=0} \sum^{\numparticles_\e}_{i=1} \gamma_{k,i} (\al^M_k)[\nabla C(\th_{k,i}) +\xi_{k,i}] q_M(\al^M_k),\eeq
By \ref{A_mix}, $\{\xi_{k,i}\}$ is uniformly integrable
and  $$-\e\sum^{(t/\e)-1}_{k=0}\sum^{\numparticles_\e}_{i=1} \gamma_{k,i} (\al^M_k)q_M(\al^M_k) \xi_{k,i}$$ is uniformly integrable.
The truncation, the continuity of $\nabla  C$,
 the definition of $\gamma_{k,i}(\al)$,
 and the moment bound conditions in \ref{A_ODE} implies that $\nabla C (\th_{k,i})$ and hence $$ - \e \sum^{(t/\e)-1}_{k=0} \sum^{\numparticles_\e}_{i=1} \gamma_{k,i} (\al^M_k)\nabla C(\th_{k,i})q_M(\al^M_k)$$ is also uniformly integrable. As a result, \cite[Lemma 3.7, p. 51]{Kus84} implies that $\{\al^{\e,M}\cd\}$ is tight as desired.
 \qed

Because $\{\al^{\e,M}\cd\}$ is tight,
it is sequentially compact. By Prohorov's theorem \cite{KY03},
there exists a weakly convergent subsequence. Denote this subsequence by $\{\al^{\e,M}\cd\}$ for notational simplicity and denote the limit as $\al^M\cd$.
By Skorohod representation \cite{KY03}, without changing notation
we may assume that $\al^{\e,M}\cd$ converges to $\al^M\cd$ w.p.1, and the convergence is uniform on any bounded time interval.

\begin{lemma}\label{weak-c-M} Under the conditions of Proposition \ref{prop:conv},
$\al^{\e,M}\cd$ converges weakly to $\al^M\cd$ such that
the limit is a solution of the martingale problem with operator $\L_1^M$.
\end{lemma}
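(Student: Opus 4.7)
The plan is to verify, using the martingale characterization of solutions to the ODE, that the limit $\al^M\cd$ extracted by tightness in Lemma \ref{tight} and the Skorohod representation solves the martingale problem for
\begin{equation*}
\L_1^M f(\al) = - f'_\al(\al) \Big[ \int_{\reals^\thdim} \nabla C(\th) p(\th|\al)\, d\th \Big]\, q_M(\al).
\end{equation*}
For any $f \in C^1_0$, any positive integer $p$, any times $t_1 < \cdots < t_p \le t$, and any bounded continuous $h$, I will show
\begin{equation*}
\E\Big[ h(\al^M(t_j): j\le p) \big( f(\al^M(t+s)) - f(\al^M(t)) - \int_t^{t+s} \L_1^M f(\al^M(u))\, du \big) \Big] = 0,
\end{equation*}
which is the standard criterion.

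To carry this out, I start from the truncated iteration \eqref{sa-r-tr}, apply a first-order Taylor expansion of $f(\al^{\e,M}_{k+1}) - f(\al^{\e,M}_k)$, and sum over $k \in [t/\e, (t+s)/\e)$. Using \ref{A_cost} (bounded second derivatives) together with the truncation $q_M$, the remainder is of order $O(\e)$ per step, hence negligible in the interpolation limit. The leading term is
\begin{equation*}
-\e \sum_{k=t/\e}^{(t+s)/\e-1} f'_\al(\al^M_k)\, q_M(\al^M_k) \sum_{i=1}^{\numparticles_\e} \gamma_{k,i}(\al^M_k)\big[\nabla C(\th_{k,i}) + \xi_{k,i}\big].
\end{equation*}
I treat the drift and noise terms separately. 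For the drift, for each fixed $k$, assumption \ref{A_ODE}(b) with $\rho(\th) = \nabla C(\th)$ and the i.i.d.\ structure of $\th_{k,i}$ (from \ref{A_th}) give $\sum_i \gamma_{k,i}(\al) \nabla C(\th_{k,i}) \to \int \nabla C(\th) p(\th|\al)\, d\th$ w.p.1 as $\numparticles_\e \to \infty$; the uniform moment bounds provide the dominating integrability needed for interchange of limit and expectation.

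For the noise term, I decompose $\xi_{k,i} = \bar\xi_k + (\xi_{k,i} - \bar\xi_k)$. For fixed $k$ the centered innovations are i.i.d.\ in $i$ with zero conditional mean given $\mathcal{G}_k$ (by \ref{A_noise}), so after multiplying by the bounded weights $\gamma_{k,i}$ and summing, their contribution vanishes as $\numparticles_\e\to\infty$ by a conditional law-of-large-numbers argument (noting that $\sum_i \gamma_{k,i} = 1$ and $\max_i \gamma_{k,i} \to 0$). The residual $\bar\xi_k$ piece is handled by the standard block-averaging technique: group the time sum into blocks of length $m_\e$ with $\e m_\e \to 0$ but $m_\e \to \infty$, and use the $\varphi$-mixing hypothesis \ref{A_mix} with $\sum \varphi_k^{1/2} < \infty$ to show that the within-block temporal averages of $\bar\xi_k$ converge in mean square to zero. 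The independence of $\{\th_{k,i}\}$ and $\{\xi_{k,i}\}$ in \ref{A_ODE}(a) allows the drift and noise analyses to decouple.

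Passing to the weak limit along the Skorohod-representation version, the sum collapses to a Riemann integral and we obtain $f(\al^M(t+s)) - f(\al^M(t)) = \int_t^{t+s} \L_1^M f(\al^M(u))\, du + o(1)$ in $L^1$, which yields the martingale identity. The main obstacle is the joint passage of two limits, $\numparticles_\e \to \infty$ and $\e \to 0$: I must justify that the inner sample-averaging limit and the outer time-averaging limit can be carried out in tandem uniformly on the time horizon under consideration. The truncation by $q_M$, together with the uniform integrability established in Lemma \ref{tight} and the moment bounds in \ref{A_ODE}(b), is exactly what makes this interchange legitimate. Once the martingale problem is identified, the uniqueness assumption on \eqref{ode} forces the whole sequence (not just the subsequence) to converge weakly to $\al^M\cd$.
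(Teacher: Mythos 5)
Your proposal is correct and follows essentially the same route as the paper: the same martingale-problem criterion, the same block partition with $m_\e\to\infty$, $\e m_\e\to 0$, the same splitting into drift, noise, and Taylor-remainder terms, the decomposition $\xi_{k,i}=\bar\xi_k+(\xi_{k,i}-\bar\xi_k)$ handled by conditioning and the $\varphi$-mixing/ergodicity of $\{\bar\xi_k\}$, and \ref{A_ODE}(b) for the drift. The only cosmetic difference is that the paper makes the ``limits in tandem'' step explicit via a finite-valued approximation of $\al^{\e,M}(v)$ on disjoint sets of small diameter, which is the device you allude to but do not name.
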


\para{Proof of Lemma \ref{weak-c-M}.}
By virtue of the conditions of Proposition \ref{prop:conv}, the martingale problem with operator $\L^M_1$ has a unique solution (unique in the sense of in distribution). Note that
$\L^M_1$ has the same form as $\L_1$ but with $p(\th|\al)$ replaced by
\beql{pM-def} p^M(\th|\al)= p(\th|\al)\, q_M(\al).\eeq
Let $ f\cd: \reals^\thdim\to  \reals$ such that $f \cd\in  C^1_0$ ($C^1$ function with compact support).
We proceed to show that $\al^M\cd$ is a solution of the martingale problem with operator $\L^M_1$.
 For any $t,s>0$,
 partition the interval $[t/\e, (t+s)/\e)$ into subintervals of width $m_\e$ such that $m_\e\to \infty$ but
 \begin{equation}
   \label{eq:delta_e}
   \Delta_\e \defn \e m_\e\to 0, \quad \text{  as } \e\to 0
 \end{equation}
 It is readily seen that
\beql{f-exp}\barray\ad\!\!\!\!
f(\al^{\e,M} (t+s))- f(\al^{\e,M}(t) \\
\ad\! = \sum^{(t+s)/ \Delta_\e}_{l=t/ \Delta_\e}
[f(\al^M_{l m_\e + m_\e})- f(\al^M_{l m_\e})] \\
\ad\! = \e \!\!\sum^{(t+s)/\Delta_\e}_{l=t/ \Delta_\e} \!\! \Big[\psi^\e_l +\wdt \psi^\e_l +
e^\e_l \Big],
\earray\eeq
where
\beql{psi1}\barray\ad \psi^\e_l=-f'_\al(\al^M_{l m_\e})
\sumii \sum^{\numparticles_\e}_{i=1} \gamma_{k,i} (\al^M_k) \nabla C (\th_{k,i})q_M(\al^M_k)\\
\ad \wdt \psi^\e_l= -f'_\al(\al^M_{l m_\e})\sumii \sum^{\numparticles_\e}_{i=1} \gamma_{k,i} (\al^M_k) \xi_k q_M(\al^M_k),\earray\eeq
\beql{error-1}
\barray
e^\e_l \ad=
-[f'_\al(\al^{\e,M}({v^+}))- f'_\al(\al^M_{l m_\e})]\\
 \aad \quad  \times \sumii \sum^{\numparticles_\e}_{i=1} \gamma_{k,i} (\al^M_k) [\nabla C (\th_{k,i})  +\xi_{k,i}] q_M(\al^M_k).\earray\eeq
Above we used the notation  $ \sum_{k\in I _\e} =\sum^{lm_\e +m_\e -1}_{k=lm_\e}$,
$v^+$ is on the line segment joining $ \e lm_\e$ and $\e lm_\e + \e m_\e$,
$f_\al$ denotes the partial of $f$ w.r.t. $\al$, and $z'$ denotes the transpose of $z$.

Pick out any bounded and continuous function $h\cd$,
for each $t,s>0$, any positive integer $\kappa$, and any $t_\iota \le t$ with $\iota \le \kappa$,
we shall show that $\al^M\cd$ is the solution of a martingale problem with operator $\L_1$. To this end, it is readily seen that by the weak convergence and the Skorohod representation,
\beql{es-1}\barray\ad\lim_{\e\to 0}\E h(\al^{\e,M}(t_\iota): \iota \le \kappa)
[f(\al^{\e,M}(t+s)- f(\al^{\e,M}(t) ] \\
\aad =\E h(\al^M(t_\iota): \iota \le \kappa)[f(\al^M(t+s)- f(\al^M(t)].\earray\eeq
On the other hand, using \eqref{f-exp},
\beql{es-2}\barray\ad\!\!\!
\lim_{\e\to 0}\E h(\al^{\e,M}(t_\iota): \iota \le \kappa)
[f(\al^{\e,M}(t+s)- f(\al^{\e,M}(t) ]\\
\ad = \lim_{\e\to 0}\E h(\al^{\e,M}(t_\iota): \iota \le \kappa) \sum^{(t+s)/\Delta_\e}_{l=t/ \Delta_\e} [ \psi^\e_l+\wdt \psi^\e_l+ e^\e_l],\earray\eeq

Next, we work with the  term involving $\psi^\e_l$ in \eqref{es-2}.
Note that for any $k$ satisfying $ l m_\e \le k \le l m_\e +m_\e-1$, assuming $\e l m_\e\to v$ leads to $\e k \to v$.
Furthermore, $\al^{\e,M}(v)$ can be approximated by a ``finite valued process'' in that for any $\delta>0$, there is a $j_\delta$ so that we can choose $\{O^\delta_j: j\le j_\delta\}$ as a finite collection of disjoint sets with diameter $\le \delta$ and with the union of the sets covering the range of $\al^{\e,M}(v)$.
Denote by $\E_n$ the conditional expectation with respect to
${\cal F}_n$ the $\sigma$-algebra generated by $\{\th_{k,i}, \xi_{k,i}: k \le n\}$.
Using
\eqref{eq:cond-al} and the smoothness of $q_M\cd$, we have
\bea \ad\!\!\!\! \E_{lm_\e} \psi^\e_l\\
\ad\!\! = \!
{-\Delta_\e\over m_\e}f'_\al(\al^M_{l m_\e}) \E_{lm_\e}
\!\sumii\!\! \sum^{\numparticles_\e}_{i=1} \gamma_{k,i} (\al^M_{lm_\e}) \nabla C(\th_{k,i})q_M(\al^M_{lm_\e})\\
\aad \quad \hfill +\!o(1)\\
\ad \!\! =
{-\Delta_\e\over m_\e}f'_\al(\al^\delta_j) \E_{lm_\e}
\!\sumii\! \sum^{j_\delta}_{j=1}\sum^{\numparticles_\e}_{i=1} \gamma_{k,i} (\al^\delta_j) \nabla C(\th_{k,i})q_M(\al^\delta_j)\\
\aad \qquad\hfill\times 1_{\{\al^{\e,M}(v) \in O^\delta_j\}} +o(1)\\
\ad\!\!=
{-\Delta_\e\over m_\e}f'_\al(\al^\delta_j) \E_{lm_\e}
\!\sumii\! \sum^{j_\delta}_{j=1}[\E
 \nabla C (\th | \al^\delta_j)]\\
 \aad \qquad\hfill\times 1_{\{\al^{\e,M}(v) \in O^\delta_j\}} +o(1)
,\eea
where $o(1) \to 0$ in probability and $\E_{lm_\e}$ denotes the conditional expectation for the information up to $l m_\e$.
It then follows
\beql{es-4}\barray
\ad\!\!\!\! \lim_{\e\to 0}\E h(\al^{\e,M}(t_\iota): \iota \le \kappa)\Big[ \sum^{(t+s)/\Delta_\e}_{l=t/ \Delta_\e} \psi^\e_l\Big]\\
\ad\!\! = \E h(\al^{M}(t_\iota): \iota \le \kappa)\\
\aad \qquad \times \Big[-\!
\!\int^{t+s}_t \!\int_{\reals^\thdim}\! \nabla C(\th) p^M(\th| \al^M(v)) d\th dv\Big],
\earray\eeq
where $p^M(\th|\al)$ was defined in \eqref{pM-def}.

Likewise, we have
\bea \ad\!\!\!\! \E_{lm_\e} \wdt\psi^\e_l\\
\ad\!\! = \!
{-\Delta_\e\over m_\e}f'_\al(\al^M_{l m_\e}) \E_{lm_\e}
\!\sumii\!\! \sum^{\numparticles_\e}_{i=1} \gamma_{k,i} (\al^M_{lm_\e}) \xi_{k,i}q_M(\al^M_{lm_\e})\\
\aad \quad \hfill +\!o(1)\\
\ad \!\! =
{-\Delta_\e\over m_\e}f'_\al(\al^\delta_j) \E_{lm_\e}
\!\sumii\! \sum^{j_\delta}_{j=1}\sum^{\numparticles_\e}_{i=1} \gamma_{k,i} (\al^\delta_j) \xi_{k,i} q_M(\al^\delta_j)\\
\aad \qquad\hfill\times 1_{\{\al^{\e,M}(v) \in O^\delta_j\}} +o(1)\\
\ad\!\!=
{-\Delta_\e\over m_\e}f'_\al(\al^\delta_j)
\E_{lm_\e}
\!\sumii\! \sum^{j_\delta}_{j=1}\bar \xi_{k}
 \\
 \aad \qquad\hfill\times 1_{\{\al^{\e,M}(v) \in O^\delta_j\}} +o(1)
,\eea
where $o(1)\to 0$ in probability. Here the $o(1)$ comes from the finite value approximation of the $\al\cd$ process and the use of (A7).  
The mixing condition in \ref{A_mix} then implies
$${1\over m_\e} \sumii \bar\xi_k \to 0 \ \hbox{ w.p.1,}$$
because of $\bar \xi_k$ being stationary mixing implies that
it is strongly ergodic. Thus,
\beql{es-5}\barray
\ad\!\!\!\! \lim_{\e\to 0}\E h(\al^{\e,M}(t_\iota): \iota \le \kappa)\Big[ \sum^{(t+s)/\Delta_\e}_{l=t/ \Delta_\e} \wdt\psi^\e_l\Big] = 0.
\earray\eeq

The continuity of $f_\al$ and $v^+- \e l m_\e \to 0$ as $\e\to 0$ then yields
$$f'_\al(\al^{\e,M}({v^+}))- f'_\al(\al^M_{l m_\e})\to 0 \ \hbox{ as } \ \e\to 0$$
and as a result,
\beql{es-3}
\lim_{\e\to 0}\E h(\al^{\e,M}(t_\iota): \iota \le \kappa) \Big[\sum^{(t+s)/\Delta_\e}_{l=t/ \Delta_\e} e^\e_l\Big] = 0
.\eeq
Combining \eqref{f-exp}-\eqref{es-3},
$\al^M\cd$ is the solution of the martingale problem with operator $\L_1^M$.
The lemma is proved. \qed

\para{Completion of the Proof of Proposition \ref{prop:conv}.}
Next to complete the proof of Proposition \ref{prop:conv}, we will show that the untruncated process $\al^\e\cd$ converges to $\al\cd$.
The argument is similar to \cite[p. 46]{Kus84}. Thus we omit the details. The proof of the proposition is complete. \qed

\subsection{Proof of Theorem \ref{thm:conv1}} \label{sec:conv1}
Here we use the Bernstein von-Mises theorem to characterize the posterior as a normal distribution when the parameter $\kernelstep$ in the likelihood   density goes to zero.

Recalling \ref{A_lik}, by virtue of \eqref{p-app},  $p(\th| \al)$
can be approximated by
$ \normal(\th; \al, \kernelstep^2 I_{\bar\th} )$,
the normal density given by \eqref{Sig}.
For notational convenience denote $\wdt p(\th,\al) \defn  \normal(\th; \al, \kernelstep^2 I_{\bar\th} )$ below.
  Now, we work with $\kernelstep\to 0$.
By  Taylor expansion,
$$ \nabla C(\th) = \nabla C(\al ) + \nabla ^2 C (\al_+) [ \th -\al],$$ where  $\nabla^2 C$ is the Hessian (the second partial derivatives) of $C$, and $\al_+$ is on the line segment joining $\th$ and $\al$.
Recall that $v$ is chosen below \eqref{es-2}. That is, $\e l m_\e \to v$ as a result, for any $k$ satisfying $ lm_\e \le k \le lm_\e +m_\e$,
$\e k \to v$.
It follows that
\beql{nabC}\barray \ad \int_{\reals^\thdim} \nabla C(\th) p(\th | \al(v)) d\th
\\
\aad =  \int_{\reals^\thdim} \nabla C(\th)
\wdt p(\th, \al(v)) d\th + o_\kernelstep(1)
\\
\aad  =  \int_{\reals^\thdim}\nabla C(\al) \wdt p(\th, \al(v)) d\th\\
\aad \qquad + \int_{\reals^\thdim}\nabla^2 C(\al_+(v)) [\th -\al(v)]\wdt p(\th, \al(v)) d\th +o_\kernelstep(1) \\
\aad  = \nabla C(\al(v)) +o_\kernelstep(1)\\
\aad  \to \nabla C(\al(v)) \ \hbox{ as } \ \kernelstep \to 0,
\earray\eeq
where
$o_\kernelstep(1) \to 0$
$\kernelstep\to 0$. The form of the density implies that the integral of the term on the fourth line is zero.
Thus, we obtain the following result.

\begin{corollary}\label{cor:long}
Suppose that $\{\al^\e(t): t\ge 0, \e>0\}$ is tight
and there is a unique stationary point $\al^*$ of \eqref{ode-r}, which is stable in the sense of Liapunov. Then under the conditions of Theorem \ref{thm:conv1},
$\al^\e(\cdot + t_\e)$ converges weakly to $\al^*$ as $\e\to 0$, where $t_\e$ is any sequence satisfying $t_\e\to \infty$ as $\e\to 0$.
\end{corollary}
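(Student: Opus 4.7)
The plan is to combine the tightness hypothesis with the weak convergence to the ODE \eqref{ode-r} already established in Theorem \ref{thm:conv1}, and then invoke Liapunov stability to collapse every subsequential limit to the stationary point $\al^*$. This is the standard two-step argument used in \cite{KY03,Kus84} for asymptotic analysis of constant-stepsize recursions.

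First I would show tightness of the shifted family $\{\al^\e(\cdot + t_\e): \e>0\}$ in $D([0,\infty);\reals^\thdim)$. Since $\{\al^\e(t): t\ge 0, \e>0\}$ is tight by hypothesis and the one-step increments in \eqref{sa-r} are uniformly integrable (as shown in Lemma \ref{tight}), the tightness criterion of \cite[Lemma 3.7, p.\ 51]{Kus84} applies to the shifted processes just as it does to $\al^\e(\cdot)$ itself. By Prohorov's theorem and the Skorohod representation, along any subsequence we can extract a further subsequence (still denoted $\e$) so that $\al^\e(\cdot + t_\e) \to \bar\al(\cdot)$ w.p.1 uniformly on bounded intervals.

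Next I would characterize the limit $\bar\al(\cdot)$. Exactly as in the proof of Proposition \ref{prop:conv} and Theorem \ref{thm:conv1}, the martingale averaging argument applied to the shifted process (the operator $\L_1$ is time-homogeneous, so the shift does not change the computation in \eqref{es-1}--\eqref{es-5}) shows that $\bar\al(\cdot)$ solves the ODE \eqref{ode-r} with some initial condition $\bar\al(0)$. Then, for arbitrary fixed $T>0$, set $s_\e = t_\e - T$; since $t_\e \to \infty$, also $s_\e \to \infty$ for $\e$ small. Extracting a further subsequence, $\al^\e(\cdot + s_\e) \to \tilde\al(\cdot)$ weakly, where $\tilde\al(\cdot)$ is again a solution of \eqref{ode-r}, and by construction $\bar\al(\cdot) = \tilde\al(T + \cdot)$. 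Thus $\bar\al(0) = \tilde\al(T)$ lies on the orbit of an ODE trajectory evaluated at an arbitrarily large time.

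Finally, since $\al^*$ is the unique stationary point of \eqref{ode-r} and is Liapunov stable, standard ODE arguments (LaSalle invariance together with uniqueness of the stationary point within the tight support) force $\tilde\al(T) \to \al^*$ as $T\to\infty$. Because $T$ was arbitrary and the subsequence was arbitrary, every weak subsequential limit $\bar\al(0)$ must equal $\al^*$, and hence $\bar\al(\cdot) \equiv \al^*$. Consequently $\al^\e(\cdot + t_\e) \Rightarrow \al^*$. The main obstacle is the last step: Liapunov stability by itself is only a local statement, so one must verify that the tight family $\{\al^\e(t)\}$ is confined (in the weak limit) to the domain of attraction of $\al^*$; uniqueness of the stationary point together with tightness—and, if necessary, a global Liapunov function $V$ with $\langle \nabla V(\al),\nabla C(\al)\rangle > 0$ off $\{\al^*\}$—is what makes the collapse to $\al^*$ rigorous.
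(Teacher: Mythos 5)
Your proposal is correct and follows essentially the same route as the paper: the paper's proof also shifts back by an arbitrary $T$, uses tightness of the set of possible values $\wdt\al_T(0)$ over all $T$ and subsequences, and invokes stability of $\al^*$ to force the limit at time $T$ into an arbitrarily small neighborhood of $\al^*$, concluding $\wdt\al(0)=\al^*$. Your closing caveat about Liapunov stability being only local is a fair observation, but the paper's sketch relies on the same implicit strengthening, so there is no substantive difference between the two arguments.
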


\para{Idea of Proof.} Since the idea is mainly from \cite{KY03}, we will only discuss the main aspects here. Choose $T>0$ and consider the pair of sequences $(\al^\e(t_\e+ \cdot), \al^\e(t_\e -T+\cdot))$ with limit $(\wdt\al\cd, \wdt\al_T\cd)$.
We have $\wdt\al(0)= \wdt\al_T(T)$. Although the value of
$\wdt\al_T(0)$ is  not known,  all the possible such $\wdt\al_T(0)$, over all $T$ and all convergent subsequences
belong to a  set that is tight. Then it can be shown that
for any $\delta>0$, there is a $T_\delta$ such that
for all $T\ge T_\delta$, $\wdt\al_T\cd$ will be in a neighborhood of $\al^*$ with probability $1-\delta$.
 This yields the desired conclusion, since it implies
 $\wdt \al(0)= \al^*$.
 This gives us the asymptotic properties for small $\e$ and large $t$.

{\em Remark}.
\label{rem:st}
For simplicity, we have assumed that the set $\{\al^\e(t): t\ge 0, \e>0\}$ is tight. This tightness can be verified, if we use perturbed Liapunov function methods \cite[Chapter 6 and 8]{KY03}
together with appropriate sufficient conditions, which we will not pursue here.

{\em Summary}. To get the result of Theorem \ref{thm:conv1}, we did the proof in two steps. The first step focused  on the case with $\kernelstep$ fixed, namely Proposition
\ref{prop:conv}.
The second step obtained the desired result by letting $\kernelstep\to 0$.
The result may also be obtained directly, if we let $\kernelstep=\kernelstep_\e$ such that $\kernelstep_\e\to 0$ as $\e\to 0$. We used the two-stage approach because it is easier to present the main ideas.

\section{Convergence Rate of Multi-kernel Algorithm}
\label{sec:rate}
In this section the rate
of convergence (diffusion approximation of estimation error)  of the multi-kernel passive algorithm is addressed.
Specifically we analyze the  dependence of  $\al^\e(t)-\al^*$ on $\e$.
The study is done through the analysis of the asymptotic distribution of a scaled sequence $(\al^\e(t)-\al^*)/\sqrt \e$.

\subsection{Main Results of 
  Diffusion Limit and Asymptotic Covariance}

Again for 
notational convenience we use
$$\wdt p(\th,\al) \defn  \normal(\th; \al, \kernelstep^2 I_{\bar\th} )$$
We need the following additional  assumption.

\begin{enumerate}[label=(A{\arabic*})]
  \setcounter{enumi}
  {\value{assum_index}}
\item 
\label{A_rate}
With $\kernelstep = \e$, there is a $d_0 >0$ such that
$(p(\th|\al) - \wdt p(\th,\al))/ \e^{(1/2)+d_0}$
 is bounded.
\end{enumerate}

Assumption \ref{A_rate} is based on \cite{HM76} where convergence rates are given for the
Bernstein von Mises theorem
(see also \cite[Theorem 3.1]{KV12a}, in which
 the authors examined the convergence rates in terms parameter estimates). Here we use conditions on the corresponding probabilities.

Starting with  algorithm \eqref{sa-r},
define $u_k= (\al_k-\al^*)/\sqrt \e$.
 Our main result regarding the rate of convergence of the multi-kernel passive algorithm (\ref{eq:mcmcirl}) is the following:

\begin{theorem} \label{thm:rate}  Assuming conditions of Corollary \ref{cor:long} and {\rm \ref{A_rate}} hold. Assume also that there is a $K_\e$ such that $\{ u_k: k\ge K_\e\}$ is tight. In addition, we choose $L_\e$ so that $L_\e = 1/\e$.
Define $u^\e(\cdot)$ as
\beql{uk-def}
u^\e (t)= u_k \ \hbox{ for } \ t\in [\e (k-K_\e), \e (k-K_\e)+ \e).\eeq
Then $u^\e\cd$ converges weakly to $u\cd$ such that $u\cd$ is a solution of the stochastic differential equation
\beql{sde-lim} d u = - \nabla^2 C(\al^*) \,u \,dt + \Sigma^{1/2} \,dw,\eeq
where $w\cd$ is a standard Brownian motion and $\Sigma$ is the covariance defined in \eqref{cov-x}. \qed
\end{theorem}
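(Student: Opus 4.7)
The plan is to apply the standard constant-stepsize diffusion-approximation technique from stochastic approximation (see \cite{KY03}), adapted to the multi-kernel recursion \eqref{sa-r} and reusing the block-averaging and martingale-problem machinery of Lemma \ref{weak-c-M}. I would (i) center and linearize around $\al^*$, (ii) characterize the weak limit of the rescaled interpolated process $u^\e(\cdot)$ through a martingale problem, and (iii) invoke uniqueness of the resulting linear Ornstein--Uhlenbeck SDE.

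First I would linearize at $\al^*$. Since $\nabla C(\al^*)=0$ and $C\in C^2$ by \ref{A_cost}, with $u_k=(\al_k-\al^*)/\sqrt{\e}$ the conditional drift in \eqref{sa-r} decomposes as
\begin{equation*}
\sum_{i=1}^{L_\e}\gamma_{k,i}(\al_k)\,\nabla C(\th_{k,i})=\int \nabla C(\th)\,p(\th|\al_k)\,d\th+\zeta^{(a)}_k,
\end{equation*}
where $\zeta^{(a)}_k$ is the centered sampling fluctuation of the self-normalized importance-sampling estimator studied in Theorem \ref{thm:is}. By the Bernstein--von Mises step carried out in \eqref{nabC} under \ref{A_lik}, the integral equals $\nabla C(\al_k)+r_k$ with $|r_k|=O(\e^{1/2+d_0})$ by \ref{A_rate}. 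A further Taylor expansion gives $\nabla C(\al_k)=\sqrt{\e}\,\nabla^2 C(\al^*)\,u_k+O(\e|u_k|^2)$. Combining this with the gradient-noise term $\zeta^{(b)}_k=\sum_i \gamma_{k,i}(\al_k)\,\xi_{k,i}$, the recursion becomes
\begin{equation*}
u_{k+1}=u_k-\e\,\nabla^2 C(\al^*)\,u_k-\sqrt{\e}\,\zeta_k+o(\sqrt{\e}),\qquad \zeta_k=\zeta^{(a)}_k+\zeta^{(b)}_k.
\end{equation*}
The choice $L_\e=1/\e$ is precisely the scaling at which $\var(\sqrt{\e}\,\zeta_k)$ accumulates over unit continuous time to a finite, nontrivial covariance $\Sigma$.

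Tightness of $\{u^\e(\cdot)\}$ is given by hypothesis, so Prohorov together with the Skorohod representation yield an a.s.\ convergent subsequence with limit $u(\cdot)$. To identify the limit, for $f\in C^2_0(\reals^\thdim)$ and $s>0$ I partition $[t/\e,(t+s)/\e)$ into blocks of length $m_\e$ with $\Delta_\e=\e m_\e\to 0$ as in \eqref{f-exp}, but now Taylor-expand $f(u_{lm_\e+m_\e})-f(u_{lm_\e})$ to \emph{second} order. The first-order block averages, exactly in the manner of \eqref{es-4} but with drift $-\nabla^2 C(\al^*)\,u$ in place of $-\int \nabla C(\th)p(\th|\al)\,d\th$, to $-\int_t^{t+s} f'(u(v))\,\nabla^2 C(\al^*)\,u(v)\,dv$. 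The second-order block, using \ref{A_noise} (i.i.d.\ in $i$ for fixed $k$), the $\phi$-mixing of $\{\bar\xi_k\}$ in \ref{A_mix}, and the independence of $\{\th_{k,i}\}$ and $\{\xi_{k,i}\}$ in \ref{A_ODE}(a), averages to $\tfrac{1}{2}\int_t^{t+s}\tr\bigl(\nabla^2 f(u(v))\,\Sigma\bigr)\,dv$ with $\Sigma$ the long-run covariance of $\zeta_k$ as in \eqref{cov-x}. Hence $u(\cdot)$ solves the martingale problem for the generator $\L f(u)=-f'(u)\,\nabla^2 C(\al^*)\,u+\tfrac{1}{2}\tr(\nabla^2 f(u)\,\Sigma)$, whose solution is unique in distribution and coincides with the SDE \eqref{sde-lim}.

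The main obstacle is controlling the two sources of bias on the delicate $\sqrt{\e}$ scale. Naively, the Bernstein--von Mises replacement error $p(\th|\al)-\wdt p(\th,\al)$ would produce a spurious drift of order $1/\sqrt{\e}$; assumption \ref{A_rate} is imposed exactly so this bias is $O(\e^{d_0})$ and vanishes in the limit. Equally, the sampling fluctuation $\zeta^{(a)}_k$ must be centered against $\int \nabla C(\th)\,\wdt p(\th,\al_k)\,d\th$ rather than $\nabla C(\al_k)$, or a second $O(1)$ drift appears after scaling. Pooling $\zeta^{(a)}$ and $\zeta^{(b)}$ into a single long-run covariance consistently under the mixing assumption, and uniformly bounding the higher Taylor residuals on tight sets of $u_k$, constitute the essential technical content; the remaining estimates mimic those in Lemma \ref{weak-c-M} verbatim.
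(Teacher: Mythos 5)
Your overall strategy coincides with the paper's: center and rescale at $\al^*$, linearize the drift, control the Bernstein--von Mises replacement error with \ref{A_rate}, and identify the limit through the martingale problem for the linear Ornstein--Uhlenbeck generator using the same block-averaging device as in Lemma \ref{weak-c-M}. The drift analysis, the Taylor expansion of $\nabla C$ at $\al^*$, and the role of \ref{A_rate} in killing the $1/\sqrt{\e}$ bias are handled exactly as in the paper.

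The one place where your argument goes astray is the identification of the diffusion coefficient. You pool the importance-sampling fluctuation $\zeta^{(a)}_k$ and the gradient noise $\zeta^{(b)}_k=\sum_i\gamma_{k,i}\xi_{k,i}$ and assert that $L_\e=1/\e$ is ``precisely the scaling at which $\var(\sqrt\e\,\zeta_k)$ accumulates \dots\ to a finite, nontrivial covariance $\Sigma$.'' That is not what happens, and as stated it does not justify the specific $\Sigma$ the theorem asserts. In the paper the noise is split as $\sum_i\gamma_{k,i}\xi_{k,i}=\bar\xi_k+\sum_i\gamma_{k,i}(\xi_{k,i}-\bar\xi_k)$, and with $L_\e=1/\e$ both the conditionally i.i.d.\ part and the fluctuation of $\sum_i\gamma_{k,i}\th_{k,i}$ about $\int\th\,p(\th|\al^*)\,d\th$ are $O_P(1/\sqrt{L_\e})=O_P(\sqrt\e)$; after the additional $\sqrt\e$ rescaling they contribute only $O(\e)$ per step to the quadratic variation and hence vanish in the limit (this is the content of \eqref{ee1} and the $\Phi^\e_k$ estimate following \eqref{ee2}). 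The Brownian motion in \eqref{sde-lim} therefore arises \emph{only} from the component $\bar\xi_k$ that is common across the $L_\e$ samples at each $k$, via the mixing functional CLT of Lemma \ref{lem:xi-to-bm}, and $\Sigma$ in \eqref{cov-x} is the long-run covariance of $\{\bar\xi_k\}$ alone. Your ``pooled long-run covariance of $\zeta_k$'' would collapse to the same quantity if computed carefully, but your write-up suggests the particle-level fluctuations survive into $\Sigma$ at this scaling; you must show explicitly that they do not, since otherwise formula \eqref{cov-x} is not established. The correct role of $L_\e=1/\e$ is to suppress the particle-level noise, not to tune it to a nontrivial limit.
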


{\em Remarks}. (i) \eqref{sde-lim} is a functional central limit theorem; namely, the scaled interpolated error process $u^\e \cd$ of the multi-kernel algorithm converges to a Gaussian process $u\cd$.\\
(ii) In Theorem \ref{thm:rate},
we assumed that there is a $K_\e$ such that $\{u_k: k\ge K_\e\}$ is tight. This tightness can be proved by the method of perturbed Liapunov function methods. Here we simply assume it; see
 \cite[Chapter 10]{KY03} for further details.
In addition,
to obtain the desired limit, we  need to truncate $u_k$,  defined as $u_k^M$, and then consider the interpolated process $u^{\e,M}(t)$ similar to the proof of the convergence. However, for notational simplicity, we will not use the truncation device, but proceed as if the iterates were bounded.

The main consequence of Theorem \ref{thm:rate} is that  we can determine the asymptotic covariance of the  diffusion \eqref{sde-lim}. In the stochastic approximation literature~\cite{KY03,BMP90},  this asymptotic covariance specifies the asymptotic rate of convergence. We have the following result for the multi-kernel algorithm.

\begin{corollary} \label{cor:asympcov}
Assume  $\nabla^2 C(\eth^*)$ is positive definite.  Then the diffusion~\eqref{sde-lim} is a stationary process with asymptotic marginal distribution
$\normald(0,\asympCov)$ where covariance  $\asympCov$ satisfies the algebraic Liapunov equation
\begin{equation}
  \label{eq:liap}
\nabla^2 \Cost(\eth^*)  \,\asympCov + \asympCov\, \nabla^2\Cost(\eth^*) =    \Sigma
\end{equation}
where $\Sigma$ is defined in \eqref{cov-x}.
\end{corollary}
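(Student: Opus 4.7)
The plan is to exploit the fact that \eqref{sde-lim} is a linear stochastic differential equation with constant coefficients, i.e., a multivariate Ornstein--Uhlenbeck process. Since $\nabla^2 C(\al^*)$ is positive definite by hypothesis, the drift matrix $-\nabla^2 C(\al^*)$ is Hurwitz (all eigenvalues have strictly negative real parts). This stability is exactly what guarantees existence and uniqueness of an invariant measure for the OU process, and that measure is necessarily Gaussian with zero mean (because the initial data, drift, and diffusion coefficient are linear/constant, and the driving noise $w\cd$ is Gaussian).

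First I would solve the SDE \eqref{sde-lim} explicitly using the integrating-factor representation
\[
u(t) = e^{-\nabla^2 C(\al^*) t} u(0) + \int_0^t e^{-\nabla^2 C(\al^*)(t-s)} \Sigma^{1/2}\, dw(s),
\]
which is standard for linear SDEs. Then I would compute the covariance $P(t) = \cov(u(t))$ using the Itô isometry; this produces
\[
P(t) = e^{-\nabla^2 C(\al^*) t} P(0) e^{-\nabla^2 C(\al^*) t} + \int_0^t e^{-\nabla^2 C(\al^*)(t-s)} \Sigma\, e^{-\nabla^2 C(\al^*)(t-s)}\, ds.
\]
Differentiating this identity in $t$ yields the Liapunov differential equation
\[
\dot P(t) = -\nabla^2 C(\al^*) P(t) - P(t) \nabla^2 C(\al^*) + \Sigma.
\]

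Next, because $\nabla^2 C(\al^*)$ is positive definite, $e^{-\nabla^2 C(\al^*) t}\to 0$ as $t\to\infty$, so $P(t)$ has a finite limit $\asympCov$ independent of $P(0)$. Passing to the limit in the differential equation with $\dot P(t)\to 0$ gives the algebraic Liapunov equation \eqref{eq:liap}. Equivalently, the limit can be written as
\[
\asympCov = \int_0^\infty e^{-\nabla^2 C(\al^*) s}\, \Sigma\, e^{-\nabla^2 C(\al^*) s}\, ds,
\]
which one verifies directly satisfies \eqref{eq:liap} by differentiating under the integral. Uniqueness of the positive semidefinite solution follows from standard linear-systems theory once $-\nabla^2 C(\al^*)$ is Hurwitz.

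Finally, to conclude stationarity of the process and identify the marginal law, I would initialize the OU process at $u(0)\sim \normald(0,\asympCov)$; then $P(t)\equiv \asympCov$ for all $t$, and the distribution of $u(t)$ is $\normald(0,\asympCov)$ for every $t$. For any other initialization, $u(t)$ converges in distribution to $\normald(0,\asympCov)$, so this Gaussian law is the unique invariant distribution. The only mildly delicate point is verifying that the Liapunov equation \eqref{eq:liap} has a unique solution, but this is immediate from the Hurwitz property of $-\nabla^2 C(\al^*)$ via the integral representation above; no real obstacle arises since the entire argument is a textbook application of linear OU theory once Theorem~\ref{thm:rate} has delivered the diffusion limit.
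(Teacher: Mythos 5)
Your proof is correct and is precisely the standard Ornstein--Uhlenbeck argument that the paper implicitly invokes (the paper states the corollary without proof, deferring to the stochastic approximation literature): since $\nabla^2 C(\al^*)$ is symmetric positive definite, the drift matrix is Hurwitz, the explicit solution and It\^o isometry give the Liapunov differential equation for $P(t)$, and the stationary limit yields \eqref{eq:liap} with the unique invariant law $\normald(0,\asympCov)$. No gaps; this is the intended route.
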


To summarize, the solution $\asympCov$ of the algebraic Liapunov equation
(\ref{eq:liap}) is the asymptotic covariance (rate of convergence)
of the multi-kernel algorithm \eqref{eq:mcmcirl}.

\subsection{Proof of Theorem \ref{thm:rate}}
We have
\beql{rr0}\barray
 u_{k+1} =u_k\ad - \e
\sum^{\numparticles_\e}_{i=1} \gamma_{k,i} (\al_k) \nabla^2 C(\al^*) u_k   -  \sqrt \e  \bar \xi_{k} \\
\ad  -  \e
\sum^{\numparticles_\e}_{i=1} \gamma_{k,i} (\al_k) \nabla^2 C(\al^*) {\th_{k,i}- \al^* \over \sqrt \e}\\
\ad -\sqrt \e \sum^{\numparticles_\e}_{i=1} \gamma_{k,i} (\al_k) [\xi_{k,i}- \bar \xi_k] + \wdt g_{k,i} ,\earray
 \eeq
 where
\beql{gki} \barray \wdt g_{k}=\ad  - \e
\sum^{\numparticles_\e}_{i=1} \gamma_{k,i} (\al_k) [\nabla^2 C(\al_k^+)- \nabla^2 C(\al^*)] u_k \\
\ad   -  \e
\sum^{\numparticles_\e}_{i=1} \gamma_{k,i} (\al_k) [\nabla^2 C(\al_k^+)-\nabla^2 C(\al^*)] {\th_{k,i}- \al_k \over \sqrt \e}, \earray\eeq
and $\al^+_k$ is on the line segment joining $\theta_{k,i}$ and $\al^*$.

\begin{lemma}\label{lem:xi-to-bm}
Define $$w^\e(t) = -\sqrt \e \sum^{t/\e}_{k=0} \bar \xi_k,$$
where $t/\e$ is again understood to be the integer part of $t/\e$.
Then under {\rm \ref{A_cost}} and {\rm\ref{A_ODE}},
$w^\e\cd$ converges to a Brownian motion $\wdt w\cd$ whose covariance is $\Sigma t$ with $\Sigma$ given by
\beql{cov-x} \Sigma = \E \bar \xi_0 \bar \xi_0' +
 \sum^\infty_{k=1}\E \bar \xi_0 \bar \xi'_k + \sum^\infty_{k=1} \E \bar \xi_k \bar \xi'_0.
\eeq
\end{lemma}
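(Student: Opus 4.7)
The plan is to recognize this as a functional central limit theorem (FCLT) for partial sums of a stationary $\phi$-mixing sequence, and to apply a standard invariance principle (such as the one in \cite{Bil99} or \cite{EK86}) to the sequence $\{\bar\xi_k\}$. The key hypotheses are already built into assumption \ref{A_mix}: $\{\bar\xi_k\}$ is stationary with $\E|\bar\xi_k|^2<\infty$ and mixing coefficients satisfying $\sum_k \varphi_k^{1/2}<\infty$, which is precisely Ibragimov's classical sufficient condition for an FCLT.

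Concretely, I would proceed in three steps. First, verify that $\E \bar\xi_k = 0$; this follows because $\E\xi_{k,l}=0$ (the additive noise in \eqref{eq:nudef} is zero-mean under \ref{A_noise}), and iterated expectation gives $\E\bar\xi_k = \E\xi_{k,l}=0$. Second, identify the finite-dimensional limits: for any $0 \le s < t$, stationarity and the summability $\sum \varphi_k^{1/2} < \infty$ imply
\begin{equation*}
\var\!\Bigl(\sqrt{\e}\sum_{k=s/\e}^{t/\e}\bar\xi_k\Bigr)\ \to\ (t-s)\,\Sigma,
\end{equation*}
where $\Sigma$ telescopes into the expression \eqref{cov-x} by reordering the double sum $\sum_{j,k}\E\bar\xi_j\bar\xi_k'$ and using stationarity $\E\bar\xi_j\bar\xi_k' = \E\bar\xi_0\bar\xi_{k-j}'$; the $\varphi$-mixing bound $|\E\bar\xi_0\bar\xi_k'|\le c\,\varphi_k^{1/2}\,\E|\bar\xi_0|^2$ guarantees absolute convergence of the series defining $\Sigma$. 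A CLT for $\phi$-mixing sequences then yields marginal Gaussian convergence, and the Cram\'er--Wold device combined with the stationary-increment structure delivers joint convergence of finite-dimensional distributions to those of a Brownian motion with covariance $\Sigma t$.

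Third, establish tightness of $\{w^\e(\cdot)\}$ in $D([0,\infty);\reals^{\thdim})$. The standard route is a moment inequality for partial sums of mixing sequences (Billingsley's inequality, or the Serfling--Yokoyama bound): under the square integrability and the summability of $\varphi_k^{1/2}$, one obtains
\begin{equation*}
\E\,\|w^\e(t)-w^\e(s)\|^2 \le K\,(t-s)
\end{equation*}
uniformly in $\e$ for some constant $K$, and more generally an increment bound of the Kolmogorov--Chentsov type. Combining this with the piecewise-constant (hence right-continuous with left limits) nature of $w^\e(\cdot)$ and the Aldous--Kurtz tightness criterion yields tightness in the Skorohod space.

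The main obstacle is the mixing increment bound needed for tightness; the $\varphi$-mixing rate only controls covariances, so one must either invoke a Rosenthal-type inequality for mixing sums or handle higher moments via a blocking argument (splitting $[s/\e,t/\e]$ into long ``information'' blocks separated by short ``gap'' blocks, approximating across gaps by independence using $\varphi_k$). Once tightness and the correct finite-dimensional limits are both in hand, uniqueness of the Brownian limit (it is characterized by Gaussian marginals with the computed covariance) completes the proof, and $w^\e(\cdot)\Rightarrow \wdt w(\cdot)$ with $\cov(\wdt w(t))=\Sigma t$ as claimed.
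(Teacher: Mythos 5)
Your proposal is correct and follows the same route the paper takes: the paper simply cites the classical invariance principle for stationary $\phi$-mixing sequences (Ethier--Kurtz, Chapter 7, p.~350), whose hypotheses are exactly the conditions in \ref{A_mix} that you verify (zero mean via the tower property, absolutely summable autocovariances giving \eqref{cov-x}, and tightness from mixing moment bounds). You have essentially reconstructed the proof of the cited theorem rather than deviating from the paper's argument.
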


\para{Proof.} The proof is well known and
can be found, for example, in \cite[Chapter 7, p.350]{EK86}. \qed

Consider \eqref{rr0}. We note that the terms
$\sum^{\numparticles_\e}_{i=1} \gamma_{k,i} (\al_k) \check H_{k,i}$,
with $\check H_{k,i}$ denoting each of the functions involved in the second, the fourth,  the fifth terms,  and in  $\wdt g_{k,i}$.
Then it is readily seen that
$\gamma_{k,i}(\al_k)$ can be replaced by $\gamma_{k,i}(\al^*)$
by the continuity of $\gamma_{k,i}\cd$, the tightness of $(\al_k-\al^*)/\sqrt \e$ for $k \ge K_\e$, and the tightness of $(\th_{k,i}- \th)/\sqrt \e$. Thus we can rewrite \eqref{rr0} as
\beql{rr0a}\barray
 u_{k+1} =u_k\ad - \e
\sum^{\numparticles_\e}_{i=1} \gamma_{k,i} (\al^*) \nabla^2 C(\al^*) u_k   -  \sqrt \e  \bar \xi_{k} \\
\ad  -  \e
\sum^{\numparticles_\e}_{i=1} \gamma_{k,i} (\al^*) \nabla^2 C(\al^*) {\th_{k,i}- \al^* \over \sqrt \e}\\
\ad -\sqrt \e \sum^{\numparticles_\e}_{i=1} \gamma_{k,i} (\al^*) [\xi_{k,i}- \bar \xi_k] + \wdh g_{k} +e_k ,\earray
 \eeq
 where $\wdh g_{k}$ is as $\wdt g_{k}$ but with $\al_k$ replaced by $\al^*$, and $ \sum^{(t+s)/\e}_{k=t/\e} e_k =o(1)\to 0$ in probability as $\e\to 0$.

 To proceed, we show that the effective terms for consideration of the desired limit is only from line 1 of \eqref{rr0a}.
 Let $f\cd \in C^2_0$ ($C^2$ functions with compact support).
 As in the proof of convergence of the algorithm,
 for any $t,s>0$, and $\kappa \in {\mathbb Z}_+$,
 and $t_\iota \le t$ with $\iota \le \kappa$,
 and any bounded and continuous function $h$, we work with $f(u^\e(t+s)) - f(u^\e(t))$ similar to the convergence proof.

 Recalling the definition of $\Delta_\e$ in \eqref{eq:delta_e},
it can be seen that
 \beql{ee1}\barray \ad\!\!\!\!
 \sqrt \e  \E h( u^\e(t_\iota): \iota \le \kappa)
 \sum^{(t+s)/\Delta_\e}_{l=t/\Delta_\e} f'_u(u_{l m_\e})\\
 \aad \qquad \qquad \times \sumii
 \sum^{\numparticles_\e}_{i=1} \gamma_{k,i} (\al^*) [\xi_{k,i}- \bar \xi_k]\\
 \ad  = \sqrt \e  \E h( u^\e(t_\iota): \iota \le \kappa)
 \sum^{(t+s)/\Delta_\e}_{l=t/\Delta_\e} f'_u(u_{l m_\e})\\
 \aad \qquad \qquad\times \sumii
 \sum^{\numparticles_\e}_{i=1} \gamma_{k,i} (\al^*) \E_{lm_\e}[\xi_{k,i}- \bar \xi_k]\\
 \ad = 0  \hbox{ because $\xi_{k,i}$ is independent of $\th_{k,i}$ and } \E_{{\cal G}_k} \xi_{k,i}= \bar \xi_k.
 \earray\eeq

 Next we work with the  term on the second line of \eqref{rr0a}:
 \beql{ee2}\barray \ad\!\!\!\!
\sum^{\numparticles_\e}_{i=1} \gamma_{k,i} (\al^*) \nabla^2 C(\al^*) {\th_{k,i}- \al^* \over \sqrt \e}\\
\ad ={1\over \sqrt \e}
\nabla^2 C(\al^*) \Big[\sum^{\numparticles_\e}_{i=1} \gamma_{k,i} (\al^*) \th_{k,i}
- \int_{\reals^\thdim}  \th  p(\th| \al^*) d\th \Big] \\
\aad \ +\nabla^2 C(\al^*)
{ \disp \int_{\reals^\thdim} [\th -  \al^* ] p(\th| \al^*)d\th \over \sqrt \e}
\earray\eeq

Note that
$\numparticles_\e =1/\e$.  Moreover, we can
make $(\th_{k,i}-\th)/\sqrt \e$ be bounded in probability. Thus
\bea \ad \Phi^\e_k = {1\over \sqrt \e}
\nabla^2 C(\al^*) \Big[\sum^{\numparticles_\e}_{i=1} \gamma_{k,i} (\al^*) \th_{k,i}
- \int_{\reals^\thdim}  \th  p(\th| \al^*) d\th \Big]\\
\aad = {1\over \sqrt {\numparticles_\e\e}}
\nabla^2 C(\al^*) \\
 \aad \qquad \qquad \times\sqrt {\numparticles_\e}\Big[\sum^{\numparticles_\e}_{i=1} \gamma_{k,i} (\al^*) \th_{k,i}
- \int_{\reals^\thdim}  \th  p(\th| \al^*) d\th \Big].\eea
As a result,
 \bea \ad \E h( u^\e(t_\iota): \iota \le \kappa)\Big[ \e \sum^{(t+s)/\Delta_\e}_{l=t/\Delta_\e} f'_u(u_{lm_\e}) \sumii  \Phi^\e_k \Big]\\
 \aad \to 0 \ \hbox { as } \ \e \to  0.\eea

 Note also
 \begin{multline*}
{ \disp \int_{\reals^\thdim} [\th -  \al^* ] p(\th| \al^*)d\th \over \sqrt \e}
 = { \disp \int_{\reals^\thdim} [\th -  \al^* ]\wdt p(\th, \al^*)d\th \over \sqrt \e} \\
 + { \disp \int_{\reals^\thdim} [\th -  \al^* ](p(\th|\al^*)-\wdt p(\th, \al^*))d\th \over \sqrt \e} .
\end{multline*}
The term on the second line above goes to zero as \eqref{nabC}.
As for the last line in the above,
using \ref{A_rate}, we have
 {$${p(\th|\al^*)-\wdt p(\th, \al^*) \over \sqrt \e}\to 0
 \ \hbox{  as } \ \e \to 0.$$}
 It then follows that
\bea \ad \E h( u^\e(t_\iota): \iota \le \kappa) \sum^{(t+s)/\Delta_\e}_{l=t/\Delta_\e}\Delta_\e f'_u(u_{lm_\e})\\
 \aad \quad \times {1\over m_\e} \sumii
 {p(\th|\al^*)-\wdt p(\th, \al^*) \over \sqrt \e}
 \to 0\eea
 as $\e\to 0$.
Likewise, detailed estimates as above implies that
\beql{eeee}
\E h( u^\e(t_\iota): \iota \le \kappa) \sum^{(t+s)/\Delta_\e}_{l=t/\Delta_\e} f'_u(u_{lm_\e}) \sumii \wdh g_k
\to 0 \ \hbox{ as } \ \e\to 0.\eeq

Thus, we can show that
\beql{eee3}\barray\ad
 f(u^\e(t+s))- f(u^\e(t))\\
 \aad = \sum^{(t+s)/\e}_{l=t/\Delta_\e}
f'_u (u_{lm_\e}) \Big[ \Delta_\e {1\over m_\e}  \sumii
\sum^{\numparticles_\e}_{i=1} \gamma_{k,i} (\al^*) \nabla^2 C(\al^*) u_k \\
  \aad \qquad \hfill - \sqrt \e \sumii   \bar \xi_{k} \Big] \\
\aad \quad + {1\over 2}\sum^{(t+s)/\e}_{k=t\e} \tr[
f_{uu} (u_{lm_\e}) \Delta_\e {1\over m_\e} \sumii \sum_{l\ge k} \xi_k\xi'_l]+ o(1),
\earray \eeq
where $o(1)\to 0$ in probability uniformly in $t$.
Using the same techniques as presented above, we
that $u^\e\cd$ converges weakly to $u\cd$ such that $u\cd$ is a solution of the martingale problem with operator
$$\op f(u) = {1\over 2} \tr [ f_{uu} (u) \Sigma] - f'_u(u) \nabla^2 C(\al^*) u.$$
Note also that in view of Lemma \ref{lem:xi-to-bm}, we can replace the Brownian motion $\wdt w\cd$ by $\Sigma^{1/2} w\cd$, where $w\cd$ is a standard Brownian motion.
Thus we have proved Theorem  \ref{thm:rate}.

\section{Example: Mis-specified Stochastic Gradient}
\label{sec:mis}

So  far we  considered the case where  the passive algorithm obtains  estimates $\D_\th \cost_\dtime(\th_\dtime)$
 at randomly chosen points
independent of its  estimate $\eth_\dtime$. That is, the  passive
algorithm  has no role in determining where the gradients are evaluated.

We now consider a modification where  the gradient algorithm receives a noisy version of the gradient evaluated
 at a stochastically perturbed value of $\eth_\dtime$.
 The setup comprises two entities: a stochastic gradient algorithm and
 an agent. The
  stochastic gradient algorithm requests a gradient to be evaluated at  $\eth_k$.
  The  agent  can only partially comprehend this request; each agent $l$ understands the request as   $$\th_{k}  = \eth_k + \snoise_{k},
  \quad \snoise_{k} \sim \pdf_\snoise(\cdot) \;\; \text{i.i.d.}$$
  The  agent evaluates its gradient $\D_\th \cost_k(\th_{k})$.
Finally the agent  sends
$\{\th_{k}, \D_\th \cost_k(\th_{k})$
 to the passive node.
This procedure repeats for $k=1,2,\ldots$.
So the gradient algorithm actively specifies where to evaluate the gradient; however, the  agent evaluates a noisy gradient  and that too at  a stochastically perturbed (mis-specified) point~$\th_k$.

Consider  the classical passive gradient algorithm
(\ref{eq:passive}).
By averaging theory, for fixed kernel step size $\kernelstep$,
the ODE is
\[ \frac{d\eth}{dt}  = - \int_{\reals^\thdim} \kerneln(\frac{ \th - \eth
  }{\kernelstep}) \, \nabla \Cost(\th)\, \pdf_\snoise(\th-\eth) d\th \]
Then as the kernel step size $\kernelstep \downarrow 0$, the ODE becomes
\[ \frac{d\eth}{dt}  =  - \pdf_\snoise(0)\,\nabla \Cost(\eth) \]
On the time scale $\tau = \pdf_\snoise(0) t$, this coincides with the ODE (\ref{eq:ODE2}) for the multi-kernel algorithm (\ref{eq:mcmcirl}).


Several applications motivate  the above framework.
One motivation is inertia. If the agent has dynamics, it may not be possible to  abruptly  jump to evaluate a gradient at $\eth_k$, at best the agent can only evaluate a gradient at a point $\eth_k + \snoise_k$. A second motivation stems from mis-specification:  if the
stochastic gradient algorithm represents a machine (robot) learning from  the responses of humans, it is difficult to specify to the human exactly what choice of
$\eth_\dtime$  to   use. Then $\th_k = \eth_k + \snoise_k$ can be viewed as an approximation to this  mis-specification. A third motivation stems from noisy communication channels: suppose the gradient algorithm transmits its request
$\eth_k$ via a noisy (capacity constrained) uplink communication channel, whereas the agent
(base-station) transmits the noisy gradients without transmission error in the downlink channel.




\section{Numerical Example. Passive LMS for Transfer Learning} \label{sec:numerical}
Transfer learning \cite{PY09} refers to using knowledge gained in one domain to learn in another domain.  We consider here an example of the constrained least mean squares (LMS) algorithm involving transfer learning; the LMS algorithm is arguably the most
widely used  stochastic gradient algorithm in signal processing and system identification.

To minimize the cost $C(\th) = \E\{ |\obs_k - \psi_k^\p \th|^2\} $ w.r.t.~$\th$, the LMS algorithm is
$$  \th_{\dtime+1} = \th_\dtime + \step  \, \psi_\dtime \, (\obs_\dtime -  \psi_\dtime^\p \th_\dtime) $$

By passively observing the sequence of gradient estimates
$\{\psi_\dtime \, (\obs_\dtime -  \psi_\dtime^\p \th_\dtime) \}$,
where $\th_k$ are sampled from density $\belief(\cdot)$,
 our aim is to ``transfer'' these passive observations to solve the linearly constrained stochastic optimization problem: Estimate
 \begin{equation}
   \label{eq:clms}
   \th^* = \argmin C(\th) \;\text{ subject to } \clhs^\p \th = \crhs
 \end{equation}
 {\em without} additional training data.
 Here $\clhs \in \reals^\thdim$ and $\crhs$ are assumed known. We assume that there are several such pairs $(\clhs,\crhs)$ for which we need to solve \eqref{eq:clms} simultaneously
 without additional training data. Therefore the classical constrained LMS algorithm does not work.

The linearly constrained adaptive filtering problem \eqref{eq:clms}  arises in several applications such as antenna array processing, adaptive beamforming, spectral analysis and system identification \cite{Van04,Fro72,GC86,YSL13}. The constraint is constructed from prior knowledge such as directions of arrival in antenna array processing; also linear equality constraints  are  imposed to improve robustness of the estimates.

Rewriting the above problem in the  two-time scale setting of \eqref{eq:mcmcirl},
we are in a passive setting with $\th_{k,l}$ are sampled from density $\belief(\cdot)$ and  the
gradient estimates
\begin{equation}
  \D_\th \cost_k(\th_{k,l})
  = -  \psi_{\dtime.l} \, (\obs_{\dtime,l} -  \psi_{\dtime,l}^\p \th_{\dtime,l}),
  \quad l=1,\ldots,\numparticles  \label{eq:lmsgrad}\end{equation}
are available to a passive observer.  Given $\{\th_{k,l}, \nabla_\th \cost_k(\th_{k,l})\}$, the observer wishes to {\em transfer} this information to  estimate the constrained  minimum $\th^*$ defined in \eqref{eq:clms}.
With  $\lagrange \in \reals$ denoting a fixed Lagrange multiplier, the {\em passive constrained LMS algorithm}
corresponds to  (\ref{eq:passive}) and (\ref{eq:mcmcirl}) with
gradients specified as
\begin{equation}
  \label{eq:lmspassive_grad}
  -  \psi_{\dtime.l} \, (\obs_{\dtime,l} -  \psi_{\dtime,l}^\p \th_{\dtime,l})
+ \lagrange \clhs,   \quad l=1,\ldots,\numparticles
\end{equation}

We now illustrate this passive LMS algorithm via numerical examples.
We chose  the true parameter is $\th^o = [1,\ldots,\thdim]^\p$ and the  observations are generated as
$\obs_\dtime = \psi_\dtime^\p \th^o + w_k$ where $\psi_k\sim \normald(0,I)$ and $w_k \sim \normald(0,I)$ are i.i.d. We chose $\lambda = 1$ and $\clhs = \ones_{\thdim}$.

For the multi-kernel algorithm (\ref{eq:mcmcirl}) we chose $\numparticles = 1000$ (recall $\numparticles$ is the fast time scale horizon),  and $\pdf_\obsnoise$ as a Laplace density (\ref{eq:laplace}). We found empirically that the Laplace kernel performed significantly better than the  Gaussian kernel.

\subsection{Comparison with Classical Passive LMS} \label{sec:static}
Here  we compare our proposed multi-kernel passive algorithm
\eqref{eq:mcmcirl} with the batch-wise implementation of the classical passive algorithm \eqref{eq:batchpassive}.
For  true parameter $\thtrue = [1, 2, 3, 4, 5]^\p$ is  easily verified
that the $\th^* = [2,3,4,5,6]^\p$. Both algorithms were run
with $\kernelstep = 0.2$, $\numparticles=1000$. The step size of the multi-kernel algorithm was fixed at $\step = 5 \times 10^{-4}$. For the classical algorithm we experimented with  various step sizes in order to obtain the best results.

First we consider the case when the sampling density $\belief(\cdot)$ is the density for the normal distribution $\normald(0,\sigmagauss^2 I_{\thdim})$
where the variance $\sigmagauss^2$ is specified below.
Table \ref{table:sim} displays the simulation results averaged over 100 independent trials. It can be seen that the multi-kernel algorithm yields substantially more accurate estimates (even though no tuning of the step size was done) compared to the classical passive algorithm (with optimized step size).  Also
for
$\sigmagauss > 15$, the classical passive  algorithm diverged and it was not possible  to obtain statistically reliable  estimates of  the parameters (as reflected  by the standard deviations in Table~\ref{table:sim}).
Moreover,
in  numerical results not presented here,
we found that the classical passive  algorithm \eqref{eq:passive} yielded identical results to the batch implementation algorithm \eqref{eq:batchpassive}.

 \begin{table}[h] \centering
   \begin{subtable}{0.45\textwidth}
     \centering
\begin{tabular}[t]{|c|c|c|} \hline
    \multirow{2}{*}{$\sigmagauss$} & RMSE (std)  & RMSE  (std) \\
      &  classical & multi-kernel \\
    \hline
    5  &  0.4555 (0.1441) & 0.5165  (0.0364)  \\
    10  &  0.4461 (0.1815) & 0.3073 (0.0667) \\
    15 & 1.3404 (1.5988) & 0.3445 (0.0975) \\
    20 &  4.8171 (6.9237) & 0.3737 (0.1258) \\
    25 & 11.0636 (21.0627)  & 0.4741 (0.1595) \\
    30 &  10.6585 (22.6474) & 0.5228 (0.1727) \\
                              \hline
\end{tabular}
\caption{Normal sampling density
  $\belief(\cdot)$ being the density of the normal distribution  $\normald(0,\sigmagauss^2 I_{\thdim})$.}
\label{table:sim}
\end{subtable}
\\ \vspace{0.4cm}
\begin{subtable}{0.45\textwidth} \centering
 \begin{tabular}{|c|c|c|} \hline
    \multirow{2}{*}{$\scal$} & RMSE (std)  & RMSE  (std) \\
      &  classical & multi-kernel \\
    \hline
    5  &  0.4789 (0.1904) & 0.4207  (0.0684)  \\
    10  &  1.8595  (1.5636) & 0.4078 (0.1085) \\
    15 &  5.6482  (3.7195) & 0.4602 (0.1423) \\
    20 &  -   & 0.6209 (0.2048) \\
    25 &  -  & 0.7161 (0.2136) \\
    30 & -  & 0.8413 (0.2530 \\
                              \hline
  \end{tabular}
\caption{Logistic sampling density \eqref{eq:logistic} with scale parameter $\scal$.}  \label{table:sim2}
\end{subtable}
\caption{Comparison of classical passive algorithm \eqref{eq:batchpassive} with multi-kernel algorithm
    \eqref{eq:mcmcirl} for $\numparticles=1000$ for normal and logistic sampling densities.
The RMSE error is $\|\th_k - \th^*\|_2$ at time $k=10^4$ averaged over 100 independent trials. The standard deviations (std) over these 100 independent trials are indicated in parenthesis.  The `$-$' represent unstable (statistically unreliable) estimates.}  
\end{table}

Next we illustrate the performance  when the sampling density $\belief(\cdot)$ is heavy-tailed. We simulated  each element  $\th(i)$, $i=1,\ldots,\thdim$ independently  from the univariate logistic density
with scale parameter $\scal$; so the sampling density is
\begin{equation}
  \label{eq:logistic}
  \belief(\th) = \prod_{i=1}^\thdim\frac{ \exp(-\th(i)/\scal)}{\scal(1+ \exp(-\th(i)/\scal))^2}, \quad \scal > 0
\end{equation}
Table \ref{table:sim2} compares the performance of the multi-kernel algorithm \eqref{eq:mcmcirl} with the classical passive algorithm \eqref{eq:batchpassive} for various values of $\scal$ (recall the variance of the logistic distribution is proportional to $\scal^2$). It is seen
from Table \ref{table:sim2} that the multi-kernel algorithm yields  significantly more accurate estimates.


\subsection{Non-stationary Transfer Learning. Tracking Behavior}

The fixed step size in the passive stochastic gradient algorithms facilitates tracking time varying parameters of a non-stationary stochastic  optimization problem. To
 illustrate the tracking performance of  algorithms (\ref{eq:passive}) and (\ref{eq:mcmcirl}), we now consider a non-stationary transfer learning problem where $\th^*$ jump changes, and this jump time unknown to the algorithm.
We chose the sampling density $\belief(\cdot)$  to be the density of $\normald(0,\sigmagauss^2 I_{\thdim})$
where the variance $\sigmagauss^2$  is specified below, for  choices of parameter dimensions $\thdim = 3, 5$.

To make a fair comparison, we ran
the classical passive algorithm (\ref{eq:passive}) for $1000 $ times the number of iterations of the multi-kernel algorithm \eqref{eq:mcmcirl}. The classical passive algorithm step size is $\step = 0.05$ (this gave the best response in our simulations) and Laplace kernel with $\kernelstep = 0.2$.

Figures~\ref{sim1}, \ref{sim2} and \ref{sim3}
illustrate  sample paths of the estimates of the multi-kernel and classical passive algorithm.
Also, as in Sec.\ref{sec:static},   we found that
the classical passive algorithm is highly sensitive to the sampling density $\belief(\cdot)$  compared to the multi-kernel passive algorithm. For example when the variance of $\belief(\cdot)$ is high, the classical passive algorithm suffers from poor convergence
(Figures~\ref{sim1}, \ref{sim2}). For small variance of $\belief(\cdot)$ the classical passive algorithm performs similarly to the multi-kernel algorithm (Figure~\ref{sim3}).

All the simulation results presented are fully reproducible with Matlab code presented in the appendix.

 \begin{figure}
   \centering
     \begin{subfigure}{.45\textwidth}
       \includegraphics[scale=0.45]{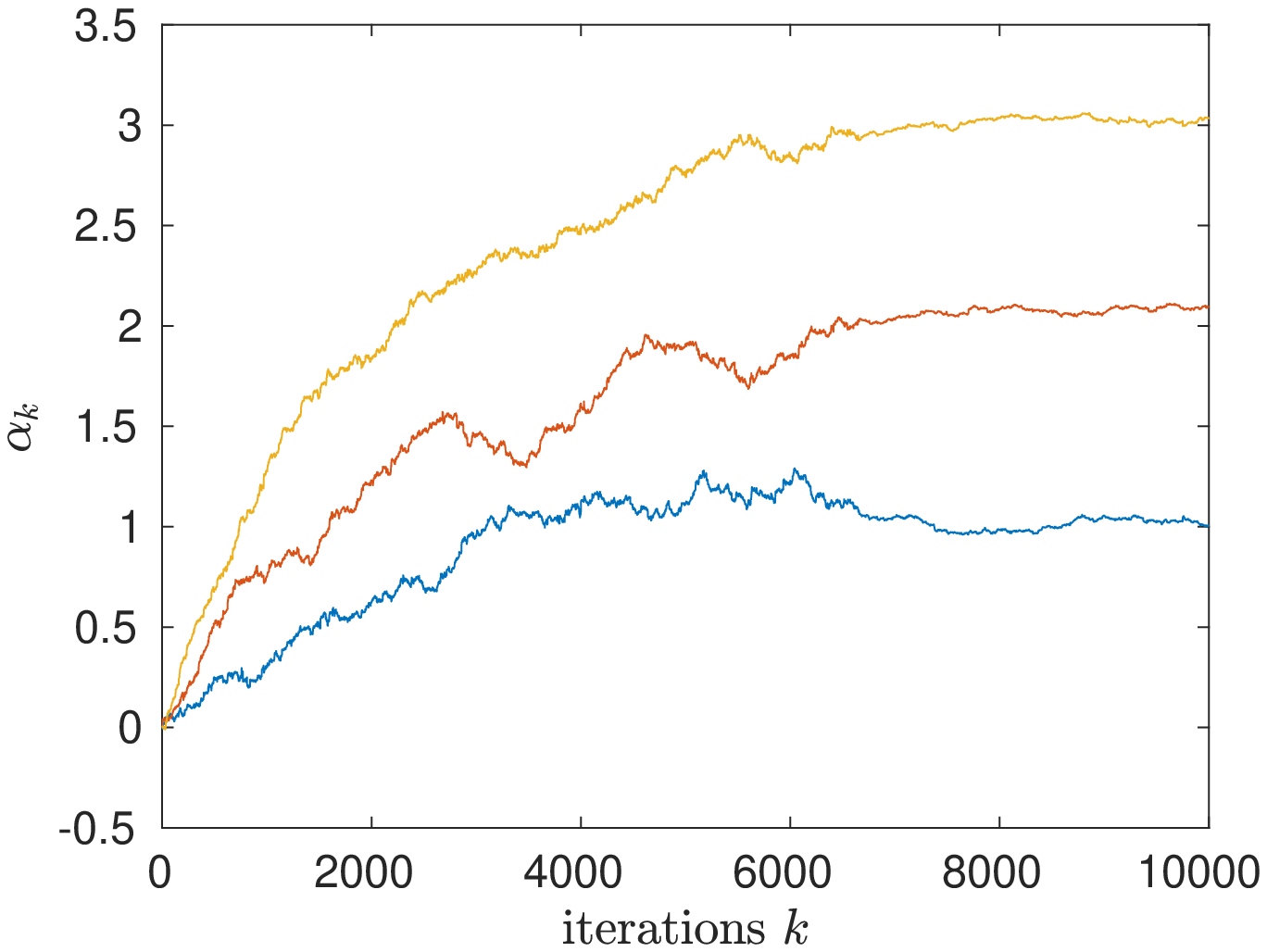}
        \caption{Multi-kernel Passive LMS}
           \end{subfigure}
 \begin{subfigure}{.45\textwidth}
   \includegraphics[scale=0.45]{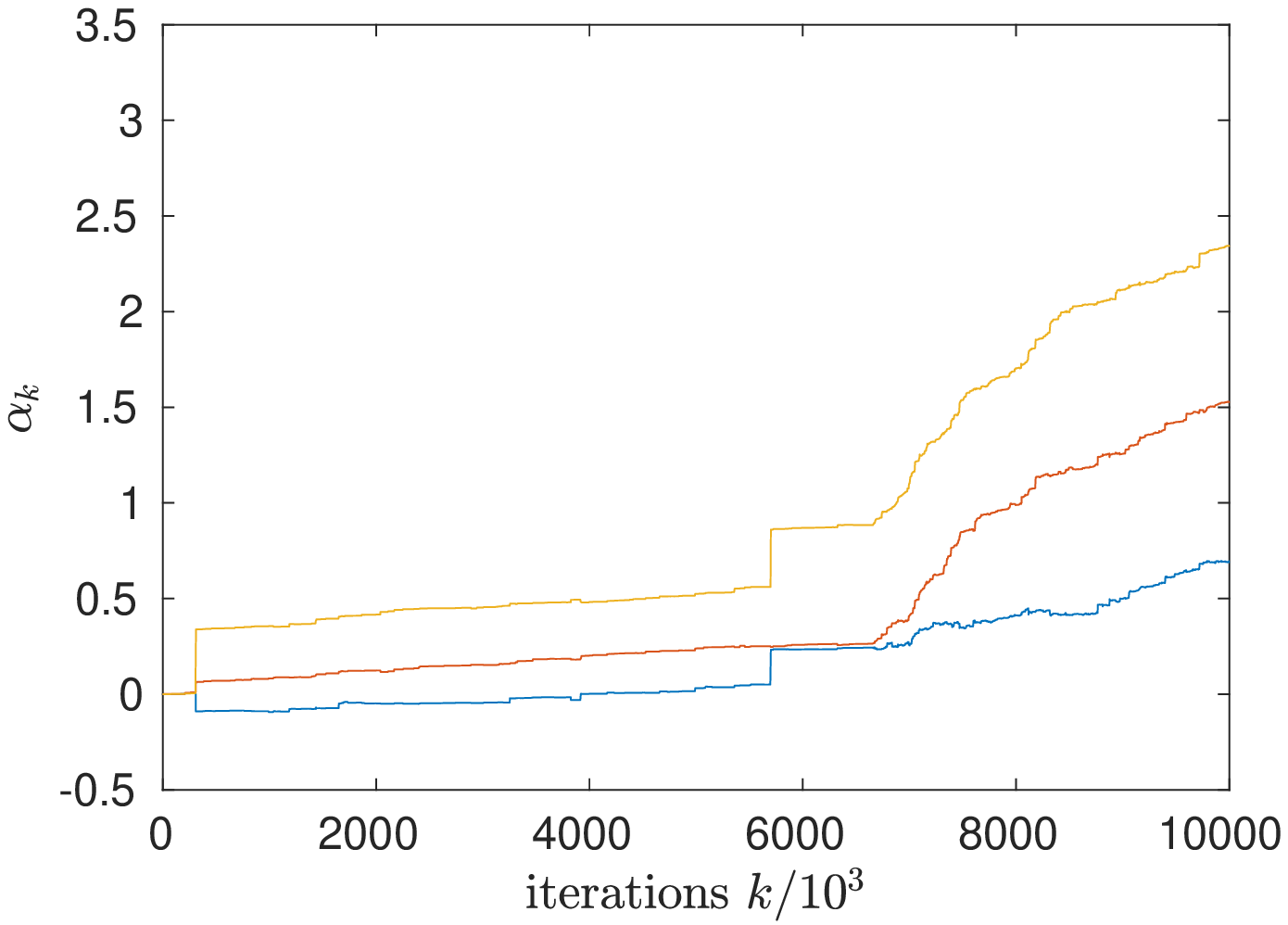}
   \caption{Classical Passive LMS}
    \end{subfigure}
 \caption{Sample paths of estimates $\eth_k \in \reals^3$ of Multi-kernel Passive LMS vs Classical Passive LMS for true parameter $\th^o = [1,2,3]^\p$. For the first 6666 iterations, $\th_k $ are sampled from $\belief(\cdot)$, the density for $\normald(0,50I)$. For the remaining iterations $\th_k $ is sampled from $\belief(\cdot)$, the density for $\normald(0,20I)$.}
  \label{sim1}
\end{figure}

\begin{figure}
  \centering
   \begin{subfigure}{.45\textwidth}
     \includegraphics[scale=0.45]{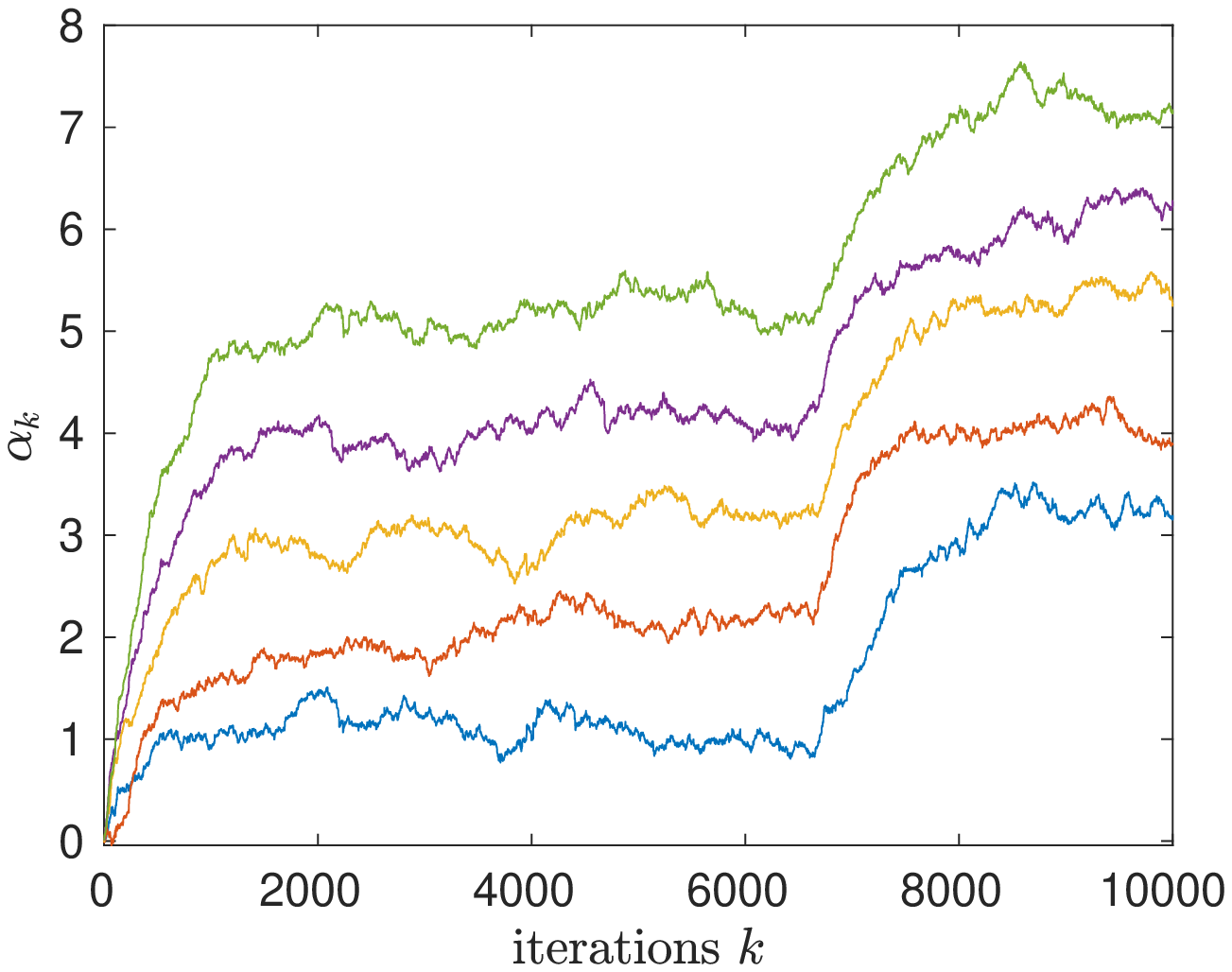}
            \caption{Multi-kernel Passive LMS} \label{multi2a}
    \end{subfigure}
    \begin{subfigure}{.45\textwidth}
      \includegraphics[scale=0.45]{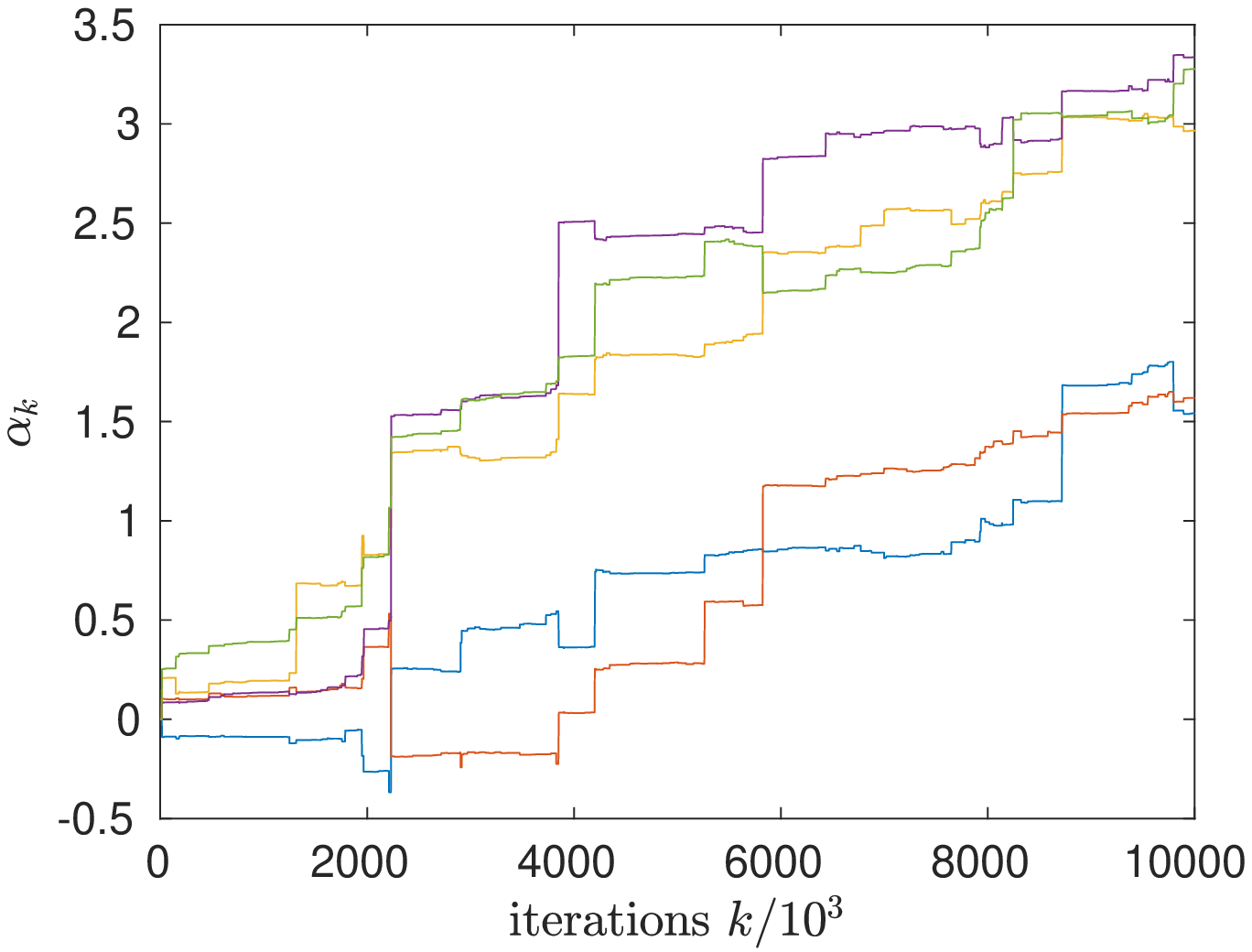}
      \caption{Classical Passive LMS}
    \end{subfigure}
    \caption{Multi-kernel Passive LMS vs Classical Passive LMS for tracking time varying true model. The figure displays the sample path estimates $\eth_k \in \reals^5$. For the first 6666 iterations,
    true parameter $\th^o = [1,2,3,4,5]$  and then $\th^o = [3,4,5,6,7]$. The sampling density is
    $\belief(\th)$ for   $\normald(0,12 I)$.}
  \label{sim2}
\end{figure}

\begin{figure}
  \centering
   \begin{subfigure}{.45\textwidth}
     \includegraphics[scale=0.45]{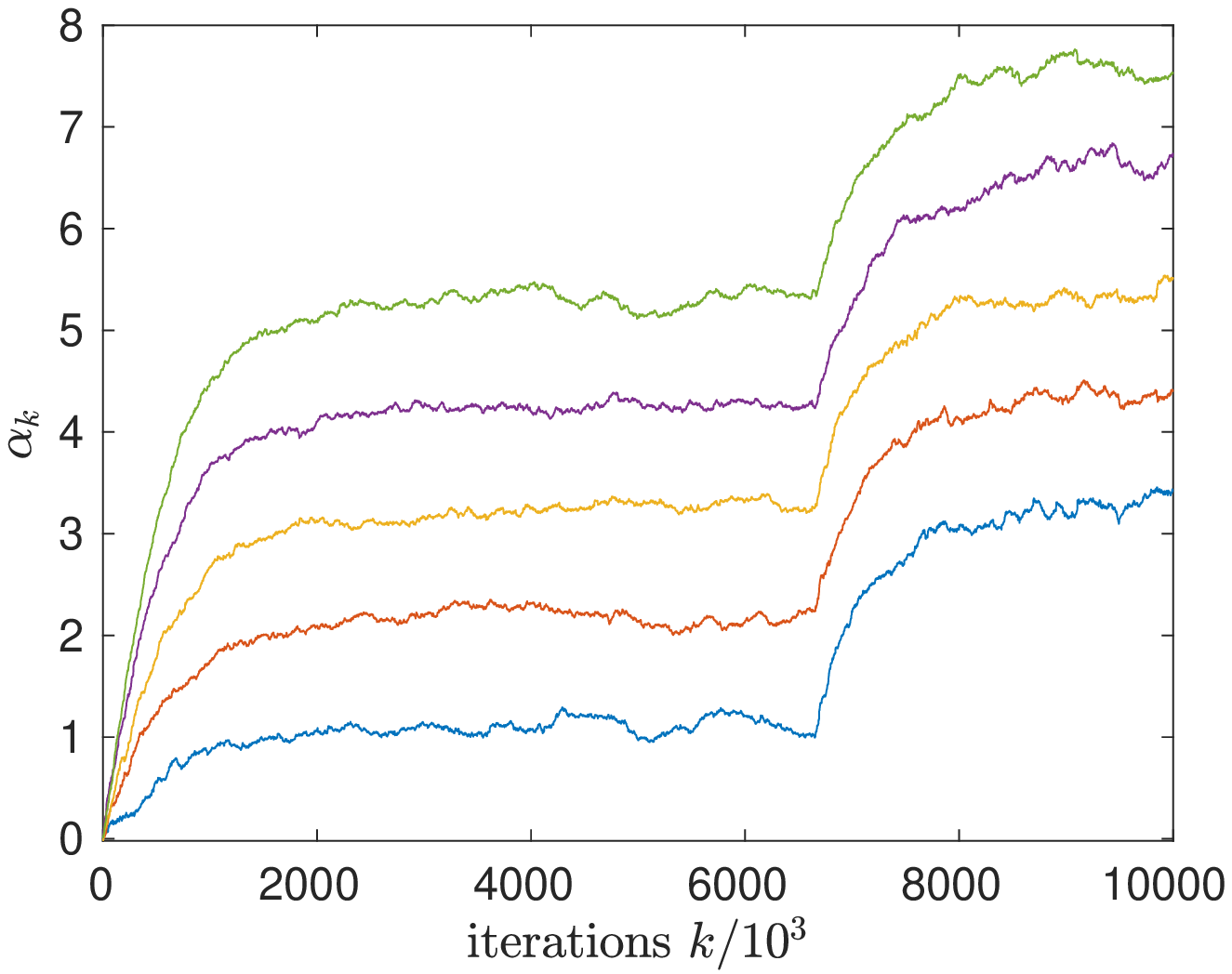}
      \caption{Multi-kernel Passive LMS}
    \end{subfigure}
    \begin{subfigure}{.45\textwidth}
      \includegraphics[scale=0.45]{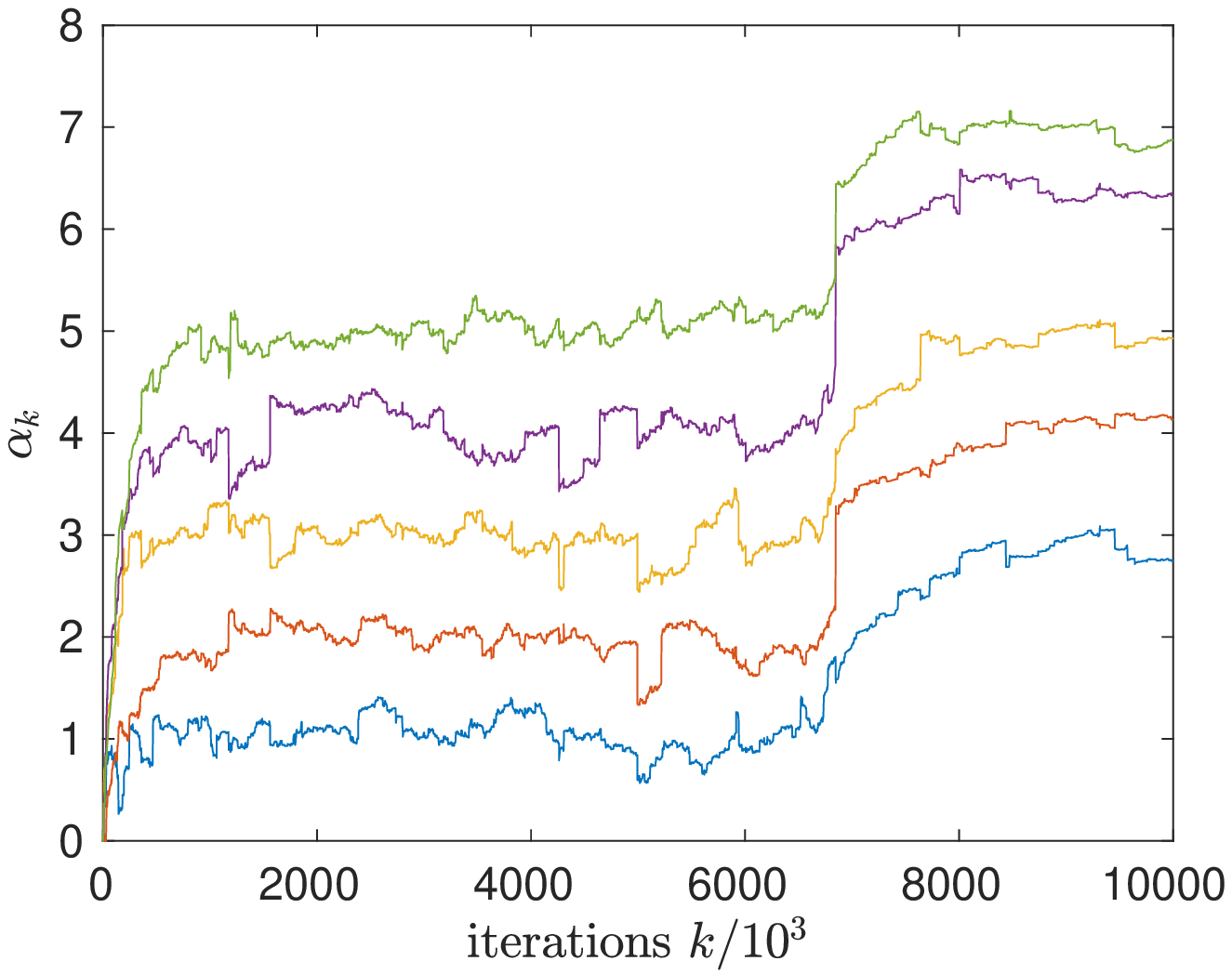}
      \caption{Classical Passive LMS}
    \end{subfigure}
  \caption{Multi-kernel Passive LMS vs Classical Passive LMS for tracking time varying true model. For the first 6666 iterations,
    true parameter $\th^o = [1,2,3,4,5]$  and then $\th^o = [3,4,5,6,7]$. The sampling density is
    $\belief(\th)$ for $ \normald(0,6 I)$.}
  \label{sim3}
\end{figure}

\section{Discussion}

    This paper has presented and analyzed a  multi-kernel two-time scale passive stochastic gradient algorithm.  The proposed algorithm is  a {\em passive} learning algorithm since the gradients are not evaluated at points specified by the algorithm;   instead the gradients are evaluated at the random points  $\th_\dtime$. By observing  noisy measurements of the gradient, the passive algorithm estimates the minimum.
    The proof involves a novel application of the  Bernstein von Mises theorem (which in simple terms is a central limit theorem for a Bayesian estimator) along with weak convergence.
    We also illustrated the performance of the  algorithm numerically in transfer learning involving the passive LMS algorithm.

    As mentioned in the introduction,  in addition to the examples presented here, other  important  applications are inverse reinforcement learning \cite{KY20} and transfer learning for more general systems. It is also worth exploring applications involving asynchronous gradient estimates from agents, e.g.\
    stragglers (slow processing nodes) in coded computation in cloud  computing \cite{KFD18}.

\bibliographystyle{IEEEtran}
\bibliography{$HOME/texstuff/styles/bib/vkm}

\appendix

\section*{Matlab Source Code for multi-kernel passive
  algorithm (\ref{eq:mcmcirl})}
The following Matlab code generates Fig.\ref{multi2a}.

\lstset{language=Matlab,%
  breaklines=true,%
   basicstyle=\footnotesize,
    morekeywords={matlab2tikz},
    keywordstyle=\color{blue},%
    morekeywords=[2]{1}, keywordstyle=[2]{\color{black}},
    identifierstyle=\color{black},%
    stringstyle=\color{mylilas},
    commentstyle=\color{mygreen},%
    showstringspaces=false,
    numbers=left,%
    numberstyle={\tiny \color{black}},
    numbersep=9pt, 
    emph=[1]{for,end,break},emphstyle=[1]\color{red}, 
}

\lstinputlisting{lms_is.m}

\begin{IEEEbiography}{Vikram Krishnamurthy}
(F'05) received the Ph.D. degree from the Australian
National University
in 1992. He is  a professor in the
School of Electrical \& Computer Engineering,
Cornell University. From 2002-2016 he
was a Professor and Canada Research Chair
at the University of British Columbia, Canada.
His research interests include statistical signal
processing  and
stochastic control in social networks and adaptive sensing. He served
as Distinguished Lecturer for the IEEE Signal Processing Society and
Editor-in-Chief of the IEEE Journal on Selected Topics in Signal Processing.
In 2013, he was awarded an Honorary Doctorate from KTH
(Royal Institute of Technology), Sweden. He is author of the books
{\em Partially Observed Markov Decision Processes} and
{\em Dynamics of Engineered Artificial Membranes and Biosensors} published by Cambridge
University Press in 2016 and 2018, respectively.
\end{IEEEbiography}

\mbox{}
\vspace{-1.5cm}

\begin{IEEEbiography}{George Yin}
(S'87-M'87-SM'96-F'02) received the B.S. degree in mathematics from
the University of Delaware in 1983, and the M.S. degree in electrical
engineering and the Ph.D. degree in applied mathematics from Brown
University in 1987.
He joined the Department of Mathematics,
Wayne State University in 1987, and became
Professor in 1996 and University Distinguished
Professor in 2017. He moved to the University of Connecticut in 2020. His research interests include
stochastic processes, stochastic systems theory and applications.
Dr. Yin was the Chair of the SIAM Activity Group on Control and
Systems Theory, and served on the Board of Directors of the American
Automatic Control Council. He is the Editor-in-Chief
of {\em SIAM Journal on Control and Optimization}, was a Senior
Editor of {\em IEEE Control Systems Letters}, and is an Associate Editor of {\em ESAIM: Control, Optimisation and Calculus of Variations}, {\em Applied Mathematics and Optimization} and many other journals. He
was an Associate Editor of {\em Automatica} 2005-2011 and {\em IEEE Transactions on Automatic Control} 1994-1998. He is a Fellow of IFAC
and a Fellow of SIAM.
\end{IEEEbiography}

\end{document}